\theoremstyle{plain}
\newtheorem{theorem}{Theorem}[section]
\newtheorem{lemma}[theorem]{Lemma}
\newtheorem{corollary}[theorem]{Corollary}
\theoremstyle{definition}
\newtheorem{assumption}[theorem]{Assumption}
\theoremstyle{remark}
\DeclareMathOperator*{\argmax}{arg\,max}
\title{Safe Time-Varying Optimization based on Gaussian Processes with Spatio-Temporal Kernel}
\author{%
  Jialin Li 
    \\
 ETH Zurich\\
  \texttt{lijial@ethz.ch} \\
  \And
  Marta Zagorowska \\
  Norwegian University of Science and Technology \\
  \texttt{marta.zagorowska@ntnu.no} \\
  \And
  Giulia De Pasquale \\
  ETH Zurich \\
  \texttt{degiulia@control.ee.ethz.ch} \\
  \And
  Alisa Rupenyan \\
  Zurich University of Applied Sciences \\
  \texttt{alisa.rupenyan@zhaw.ch} \\
  \And
  John Lygeros \\
  ETH Zurich \\
  \texttt{jlygeros@ethz.ch} \\
}
\begin{document}

\maketitle

\begin{abstract}
Ensuring safety is a key aspect in sequential decision making problems, such as robotics or process control. The complexity of the underlying systems often makes finding the optimal decision challenging, especially when the safety-critical system is time-varying. Overcoming the problem of optimizing an unknown time-varying reward subject to unknown time-varying safety constraints, we propose TVS{\small AFE}O{\small PT}, a new algorithm built on Bayesian optimization with a spatio-temporal kernel. The algorithm is capable of safely tracking a time-varying safe region without the need for explicit change detection. Optimality guarantees are also provided for the algorithm when the optimization problem becomes stationary. We show that TVS{\small AFE}O{\small PT} compares favorably against S{\small AFE}O{\small PT} on synthetic data, both regarding safety and optimality. Evaluation on a realistic case study with gas compressors confirms that TVS{\small AFE}O{\small PT} ensures safety when solving time-varying optimization problems with unknown reward and safety functions.
\end{abstract}

\section{Introduction}
\label{sec:Introduction}

We seek to interactively optimize an unknown time-varying reward function $f: \mathcal{X} \times \mathcal{T} \rightarrow \mathbb{R}$, where $\mathcal{X}$ is a finite set of decisions, and $\mathcal{T} := \{0,1,2,\ldots,T\},\; T \in \mathbb{N}_+$ denotes the discretized time domain.  We assume that the optimization problem is safety-critical, that is, there are constraints that evaluated decisions must satisfy with high probability. Similar to the reward, the constraints are also unknown and potentially time-varying, encoded through $c_i: \mathcal{X} \times \mathcal{T} \rightarrow \mathbb{R}, \; i \in \mathcal{I}_c:=\{1,2,\ldots,m\},\; m \in \mathbb{N}_+$. The optimization problem at a given time $t$ can be written as
\begin{align} \label{eq: tv_opt}
\begin{aligned}
\max \limits _{\mathbf{x} \in \mathcal{X}} &\;f(\mathbf{x}, t) \\
\text{  subject to }&c_i(\mathbf{x}, t) \geq 0,\;i \in \mathcal{I}_c   
\end{aligned}    
\end{align}
Both the reward function and the safety constraints are assumed to be unknown but can be evaluated. This is a plausible setting, for example, for UAV that need to perform rescue missions in dangerous and poorly lit environments.

\subsection{Related Work}

Bayesian Optimization (BO) is a well-established approach for interactively optimizing unknown reward functions. Various BO based approaches have been proposed to solve a wide range of problems in robotics \cite{DL-RG-DS:07,RMC-NDF-AD-JAC:07}, combinatorial optimization \cite{ZW-BS-LJ-NDF:13}, sensor networks \cite{NS-AK-SMK-MS:10}, and automatic machine learning  \cite{MH-BS-NDF:14,JS-HL-RPA:12}. However, Safe Bayesian Optimization in the time-varying setting is still under-explored. 

\textbf{Safe Bayesian Optimization} 
\looseness -1
To address safety requirements in safety-critical applications, Safe Bayesian Optimization (SBO) \cite{sui2015safe} has been proposed to avoid unsafe decisions with high probability by interactively optimizing a reward function under safety constraints. S{\small AFE}O{\small PT} \cite{sui2015safe}, one of the first SBO algorithms, expands an initial safe set iteratively based on new evaluations and an updated Gaussian Process (GP) model of safety functions. It calculates two subsets—maximizers and expanders—from the current safe set and selects the most uncertain decision within their union to balance maximizing the reward function and expanding the safe set. Subsequent variants extend S{\small AFE}O{\small PT} to handle multiple constraints \cite{berkenkamp2021bayesian}, decouple safe set expansion from optimization \cite{sui2018stagewise}, and expand the safe set in a goal-oriented manner \cite{turchetta2019safe}. These methods also explore disconnected safe regions \cite{baumann2021gosafe, sukhija2023gosafeopt} and enhance information-theoretic efficiency \cite{bottero2024informationtheoretic, hubotter2024information}. They have been applied to controller tuning for a ball-screw drive \cite{zagorowska2023efficient} and quadrupeds \cite{widmer2023tuning}, and adaptive control on a rotational motion system \cite{koenig2023risk}. However, SBO typically does not take into account changes with time.

\textbf{Contextual Bayesian Optimization} 
\looseness -1
Contextual Bayesian Optimization (CBO) has been introduced to address the influence of external environmental factors on reward and safety functions. \citet{krause2011contextual} extends the Gaussian Process Upper Confidence Bound (GP-UCB) algorithm \cite{srinivas2009gaussian} by incorporating contextual variables into unconstrained BO, demonstrating sub-linear regret analogous to GP-UCB. An advancement of this framework is proposed in \cite{fiducioso2019safe}, with the Safe Contextual GP-UCB optimizing the contextual upper confidence bound within a safe set to manage room temperature via a PID controller. \citet{berkenkamp2021bayesian} presents a contextual adaptation of S{\small AFE}O{\small PT}, discussing its safety and optimality guarantees by framing contextual SBO as distinct SBO sub-problems. Additionally, \citet{konig2021safe} extends G{\small O}OSE \cite{turchetta2019safe} to the contextual domain for model-free adaptive control scenarios. Similarly to SBO, CBO does not explicitly consider time-varying problems. 

\textbf{Time-Varying Bayesian Optimization}
\looseness -1
Time-Varying Bayesian Optimization (TVBO) addresses problems where the objective is time-dependent, modeled with a temporal kernel \cite{bogunovic2016time}. Methods in this setting include periodical resetting \cite{bogunovic2016time}, change detection \cite{brunzema2022event, hong2023optimization}, sliding-window approaches using recent data \cite{zhou2021no}, and discounting via exponentially decaying past observations \cite{deng2022weighted}. However, these techniques have been developed for unconstrained BO and are not suitable for safety-critical applications.

\textbf{Time-Varying Safe Bayesian Optimization}
\looseness -1
In the safety-critical time-varying setting, contextual lower confidence bounds can be optimized within the safe set \cite{fiducioso2019safe}, but it does not guarantee optimality theoretically. An event triggering mechanism is introduced to SBO to restart exploration from a backup policy \cite{holzapfel2023event}, but it may not trigger reliably during changes, posing a safety risk. Extensions to SBO with contextual variables provide theoretical safety and optimality analyses \cite{berkenkamp2021bayesian, widmer2023tuning}, treating contextual SBO as separate sub-problems for each contextual value, and assuming an initial safe set for each. However, ensuring optimality requires each contextual value to appear frequently, which is impractical in time-varying scenarios.

\subsection{Methodology and Contributions   }  
\textbf{Methodology}
\looseness -1
We propose the TVS{\small AFE}O{\small PT} algorithm to  optimize an unknown time-varying reward subject to unknown time-varying safety constraints. The algorithm focuses on Time-Varying Safe Bayesian Optimization (TVSBO). TVS{\small AFE}O{\small PT} utilizes a spatio-temporal kernel and time Lipschitz constants as prior knowledge about how the problem depends on time. The temporal part of the kernel encodes the continuity of the functions with time while the Lipschitz constants explicitly provide upper bounds on how fast the functions may change. Instead of considering safe sets at previous iteration as safe at the current iteration, which might lead to unsafe decisions, TVS{\small AFE}O{\small PT} robustly subtracts the safety margin when updating the safe sets (Figure~\ref{fig:toy_example_safe_set}). In this way, the algorithm is capable of adapting in real time and guarantees safety even when exploring the safe region of non-stationary problems. 

\textbf{Contributions} 
\looseness -1
Our contributions are threefold: a) We propose the TVS{\small AFE}O{\small PT} algorithm based on Gaussian processes with spatio-temporal kernels; b) We provide formal safety guarantees for TVS{\small AFE}O{\small PT} in the most general time-varying setting and  optimality guarantees for TVS{\small AFE}O{\small PT} for locally stationary optimization problems; c) We show TVS{\small AFE}O{\small PT} performs well in the most general time-varying setting both on synthetic data and on a realistic case study on gas compressors. 

\subsubsection{Expected societal impact}
\label{sec:impact}
The TVS{\small AFE}O{\small PT} algorithm proposed in this paper extends the state of the art in Time-Varying Safe Bayesian Optimization by enabling solving optimization problems with time-varying reward and constraints without pre-defining the time changes that can be compensated. As such, the algorithm can be used at the design stage of operating strategies for safety-critical systems, such as medical dosage design \cite{krishnamoorthy2022safe} and controller design in robotics \cite{koenig2023risk}, or during online operation of chemical plants \cite{krishnamoorthy2023model} or autonomous racing \cite{hewing2020learning}.

\section{TVS{\small AFE}O{\small PT} Algorithm}
\label{sec:algorithm}
\looseness -1
The TVS{\small AFE}O{\small PT} algorithm builds upon S{\small AFE}O{\small PT} \cite{sui2015safe}, to handle time-varying reward function and safety functions. The key new feature of TVS{\small AFE}O{\small PT} is its capability of safely transferring the current safe set to the next time step. TVS{\small AFE}O{\small PT} achieves this with the help of the spatio-temporal kernel as well as the sequence of time Lipschitz constants. The approach is summarized in Algorithm~\ref{alg: tv_safeopt}.

\subsection{Assumptions}
\label{sec:ProblemSetting}

Following \cite{berkenkamp2021bayesian}, we wrap the reward and safety functions into an auxiliary function $h: \mathcal{X} \times \mathcal{T} \times \mathcal{I} \rightarrow \mathbb{R}$, where $\mathcal{I} := \{0\} \cup \mathcal{I}_c$,
\begin{align} \label{eq: surrogate}
\begin{aligned}
h(\mathbf{x}, t, i):= \begin{cases}f(\mathbf{x}, t) & \text {, if } i=0 \\ c_i(\mathbf{x}, t) & \text {, if } i \in \mathcal{I}_c\end{cases}    
\end{aligned}    
\end{align}

We model the auxiliary function using a prior Gaussian Process (GP) with zero mean and spatio-temporal kernel $\kappa:  (\mathcal{X} \times \mathcal{T} \times \mathcal{I}) \times (\mathcal{X} \times \mathcal{T} \times \mathcal{I}) \rightarrow \mathbb{R}$, \cite{3569}. To ensure enough regularity, we require $h$ to be Lipschitz continuous with respect to both $\mathbf{x}$ and $t$, and to have bounded norm in the Reproducing Kernel Hilbert Space (RKHS) \cite{scholkopf2002learning} associated with the kernel $\kappa$.

\begin{assumption}\label{ass:regularity}
    The spatio-temporal kernel is positive definite, and satisfies $\kappa\left((\mathbf{x},t,i),(\mathbf{x},t,i)\right) \leq 1$, for all $ \mathbf{x} \in \mathcal{X}, t \in \mathcal{T}, i \in \mathcal{I}$. The function $h(\mathbf{x}, t, i)$ has bounded norm in the RKHS associated with kernel $\kappa$. The function $h(\mathbf{x}, t, i)$ is $L_{\mathbf{x}}$-Lipschitz continuous with respect to $\mathbf{x}$ in the domain $\mathcal{X}$ with respect to some metric $d: \mathcal{X} \times \mathcal{X} \rightarrow \mathbb{R}_{\geq 0}$ for all $t \in \mathbb{N}$, $i \in \mathcal{I}$. There exists a sequence $\{L(t)\}_{ t \in \mathbb{N}, t < T}$, such that, for all $\mathbf{x} \in \mathcal{X}$, $i \in \mathcal{I}$, $t \in \mathbb{N}, t < T$, $|h(\mathbf{x}, t+1, i) - h(\mathbf{x}, t, i)| \leq L(t)$.
\end{assumption} 

At each algorithm iteration $k$, we make a decision $\mathbf{x}_k$, which we then apply to the system and get noisy measurements $y_k^i$ of the reward function and safety functions during the iteration. We use the index $k$ to refer to the algorithm iteration; although $k$ and $t$ might differ in principle, in practice we run one algorithm iteration $k$ for each time step $t$.

\begin{assumption} \label{ass:noise}
    Observations $y_k^i = h(\mathbf{x}_k, t, i)+\varepsilon_k^i,\; \forall i \in \mathcal{I},\; t \in \mathbb{N}$ are perturbed by i.i.d. zero mean and $\sigma$-sub-Gaussian noise. 
\end{assumption}

Based on the measurements, we compute the posterior GP and make the decision for the next time step. To start the exploration, an initial set of safe decisions is assumed to be available to the algorithm. To ensure that the safe set remains non-empty after the first iteration, it is necessary to assume that the initial safety function values at every decision within the initial safe set are positive.

\begin{assumption} \label{ass:S_0}
    An initial set $S_0 \subseteq \mathcal{X}$ of safe decisions is known and for all decisions $\mathbf{x} \in S_0$, we have $c_i(\mathbf{x}, 0) > 0, \; \forall i \in \mathcal{I}_c$.
\end{assumption}

Note that similar assumptions have also been made for the standard  S{\small AFE}O{\small PT} algorithm \cite{sui2015safe} and are necessary to ensure feasibility of the exploration steps and be able to identify new safe decision.

\subsection{Safety Updates}
To ensure safety, based on Assumption~\ref{ass:regularity} and \ref{ass:noise}, we extend the definition of the confidence intervals from \cite{sui2015safe} so that, with high probability, they contain $f$ and $c_i$ using the posterior GP estimate given the data sampled so far. The confidence intervals for $h(\mathbf{x}, t, i)$ given training samples until iteration $k \geq 1$ are defined for all $ \mathbf{x} \in \mathcal{X}$ and for all $i\in \mathcal{I} $ as
\begin{align} \label{def: Q_t}
    \begin{aligned}
        Q_k(\mathbf{x},i) := \left[\mu_{k-1}(\mathbf{x},i) \pm \sqrt{\beta_{k}} \sigma_{k-1}(\mathbf{x},i)\right],\;
    \end{aligned}
\end{align}
where $\beta_{k}$ is a scalar that determines the desired confidence
interval, $\mu_{k-1}(\mathbf{x},i)$ and $\sigma_{k-1}(\mathbf{x},i)$ are the posterior mean and standard deviation of $h(\mathbf{x},t,i)$ inferred with $\mathcal{D}_k$, training samples till iteration $k$ \cite{3569}. The probability of the true function value $h$ lying within this interval depends on the choice
of $\beta_{k}$ \cite{berkenkamp2021bayesian}. We provide more details for this choice in Section \ref{subsec:safety_guarantees}.

It is possible to construct a tighter confidence interval for $h(\mathbf{x}, t, i)$ by using the  sequence $\{Q_\tau(\mathbf{x},i)\}_{\tau \leq k}$ instead of $Q_{k}(\mathbf{x},i)$ alone. To this end, we recursively define for all $\mathbf{x} \in \mathcal{X}$ and for all  $ i\in \mathcal{I}$ the intersection 
\begin{align} \label{def: C_t}
    \begin{aligned}
        C_{k}(\mathbf{x}, i) := \left(C_{k-1}(\mathbf{x},i) \oplus [-L(t-1), L(t-1)] \right) \cap Q_{k}(\mathbf{x},i),
    \end{aligned}
\end{align}
where $\oplus$ denotes the Minkowski sum, $C_0(\mathbf{x}, i)$ is $[L(0), \infty)$ for all $\mathbf{x} \in S_0$, $i \in \mathcal{I}_c$ and $\mathbb{R}$ otherwise. We utilize lower bound $l_{k}(\mathbf{x},i) := \min C_{k}(\mathbf{x},i)$, upper bound $u_{k}(\mathbf{x},i) := \max C_{k}(\mathbf{x},i)$, and width of $C_{k}(\mathbf{x},i)$ 
\begin{equation} \label{def: w_t}
    w_{k}(\mathbf{x},i) := u_{k}(\mathbf{x},i) - l_{k}(\mathbf{x},i)
\end{equation} 
to update the safe set as well as pick the next decision to explore.

Based on the updated posterior and Lipschitz constants, we can update the safe set $S_k$ with the lower bounds $l_k$ and the previous safe set $S_{k-1}$ as
\begin{align} \label{def: S_t}
    \begin{aligned}
        S_k = \cap_{i \in \mathcal{I}_c} \cup_{\mathbf{x} \in S_{k-1}}\{\mathbf{x}^{\prime} \in \mathcal{X} \mid l_k(\mathbf{x},i)
        -L_{\mathbf{x}} d(\mathbf{x}, \mathbf{x}^{\prime}) - L(t) \geq 0\}.
    \end{aligned}
\end{align}
The set $S_k$ contains decisions that with high probability fulfill the safety constraints given the GP confidence intervals and the Lipschitz constants. In contrast to S{\small AFE}O{\small PT}, the safe set of TVS{\small AFE}O{\small PT} is allowed to shrink to adapt to the potential change of the safe region given the time-varying setting. However, the safe set might even become empty after the update. This is either because the safe region indeed becomes empty or because the updated safe set conservatively excludes all decisions with a lower bound of some safety function below $L$ to guarantee safety. In all these cases, if the updated safe set is empty, we terminate the algorithm.

\subsection{Safe Exploration and Exploitation}
With the safe set updated, the next challenge is to trade off between exploitation and expansion of the safe region. As in the standard S{\small AFE}O{\small PT}, the potential maximizers are those decisions, for which the upper confidence bound of the reward function is higher than the largest lower confidence bound, that~is,
\begin{align} \label{def: M_t}
    \begin{aligned}
        M_k = \left\{\mathbf{x} \in S_k \mid u_k(\mathbf{x},0) \geq \max_{\mathbf{x}^{\prime} \in S_k} l_k(\mathbf{x}^{\prime},0)\right\}.
    \end{aligned}
\end{align}
To identify the potential expanders, $G_k$, containing all decisions that could potentially expand the safe set, we first quantify the potential enlargement of the current safe set after sampling a new decision $\mathbf{x}$.  To do so, we define the function
\begin{align} \label{def: e_t}
    \begin{aligned}
        e_k(\mathbf{x}) := 
        |\{\mathbf{x}^{\prime} \in \mathcal{X} \backslash S_k \mid \exists i \in \mathcal{I}_c:
        u_k(\mathbf{x},i) - L_\mathbf{x} d(\mathbf{x},\mathbf{x}^{\prime}) - L(t) \geq 0 \}|,
    \end{aligned}
\end{align}
where $|\cdot|$ refers to the cardinality of a set, and then update 
\begin{align} \label{def: G_t}
    \begin{aligned}
        G_k = \left\{\mathbf{x} \in S_k \mid e_k(\mathbf{x})>0\right\}.
    \end{aligned}
\end{align}

At iteration $k$, TVS{\small AFE}O{\small PT} selects a decision $\mathbf{x}_k$ within the union of potential maximizers \eqref{def: M_t} and expanders \eqref{def: G_t}
\begin{align} \label{def: x_t}
    \begin{aligned}
       \mathbf{x}_k = \argmax_{\mathbf{x} \in G_k \cup M_k, i \in \mathcal{I}} w_k(\mathbf{x},i),
    \end{aligned}
\end{align}
where $w_k(\mathbf{x},i)$ is defined as in (\ref{def: w_t}). The objective of the greedy selection process in \eqref{def: x_t} is to take the most uncertain decision among the expanders $G_k$ and the maximizers $M_k$. The decision $\mathbf{x}_k$ is then applied to the system and after making observations of the reward and safety functions, $\mathbf{y}_k := (y_k^0, y_k^1, \ldots, y_k^m)$, we add $(\mathbf{x}_k, \mathbf{y}_k)$ to the training samples.

At any iteration, we can obtain an estimate for the current best decisions from
\begin{align} \label{def: x_t_hat}
    \begin{aligned}
       \hat{\mathbf{x}}_k = \argmax_{\mathbf{x} \in S_k} l_k(\mathbf{x},0),
    \end{aligned}
\end{align}
which returns the maximizer of the lower bound of the reward function within the current safe set. 

\begin{algorithm}[H] 
  \begin{algorithmic}[1]
  \STATE \textbf{Input:}
  Sample set $\mathcal{X}$\\
  GP priors for $f$, $c_i$\\
  Lipschitz constants $L_{\mathbf{x}}$ and $\{L(t)\}_{t \in \mathbb{N}, t<T}$\\
  Safe set seed $S_0$\\
  \STATE $C_0(\mathbf{x},i) \leftarrow\;[L(0), \infty)$, for all $\mathbf{x} \in S_0,\; i \in \mathcal{I}_c$
  \STATE $C_0(\mathbf{x},i) \leftarrow \mathbb{R}$, for all $\mathbf{x} \in \mathcal{X} \backslash S_0,\; i \in \mathcal{I}_c$
  \STATE$C_0(\mathbf{x},0) \leftarrow \mathbb{R}$
  \STATE Query a point $\mathbf{x}_0 \in S_0$, $y_0^i \leftarrow h(\mathbf{x}_0, 0, i)+\varepsilon_0^i,\; i \in \mathcal{I}$
  \STATE $\mathcal{D}_{0}=\left\{\left(\mathbf{x}_0, \mathbf{y}_0\right)\right\}$
  \FOR{$k = 1, 2, \cdots, T $}
    \STATE Calculate $Q_k(\mathbf{x},i)$ as in \eqref{def: Q_t}, $\forall \mathbf{x} \in \mathcal{X}, \forall i \in \mathcal{I}$
    \STATE $
        C_k(\mathbf{x},i) \leftarrow \left(C_{k-1}(\mathbf{x},i) \oplus  [-L(t-1), L(t-1)] \right) \cap Q_k(\mathbf{x},i)$
    \STATE $
        S_k \leftarrow \cap_{i \in \mathcal{I}_c} \cup_{\mathbf{x} \in S_{t-1}}\{\mathbf{x}^{\prime} \in \mathcal{X} \mid l_k(\mathbf{x},i)
        -L_{\mathbf{x}} d(\mathbf{x}, \mathbf{x}^{\prime}) - L(t) \geq 0\}$ 
    \IF{$S_k = \varnothing$} 
    \STATE break
    \ENDIF
    \STATE $M_k \leftarrow\left\{\mathbf{x} \in S_k \mid u_k(\mathbf{x},0) \geq \max_{\mathbf{x}^{\prime} \in S_k} l_k(\mathbf{x}^{\prime},0)\right\}$
    \STATE $G_k \leftarrow\left\{\mathbf{x} \in S_k \mid e_k(\mathbf{x})>0\right\}$ with $e_k(\mathbf{x})$ from \eqref{def: e_t}
    \STATE $\mathbf{x}_k \leftarrow \argmax_{\mathbf{x} \in G_k \cup M_k, i \in \mathcal{I}} w_k(\mathbf{x},i)$
    \STATE $y_k^i \leftarrow h(\mathbf{x}_k, t, i)+\varepsilon_k^i,\; i \in \mathcal{I}$
    \STATE $\mathcal{D}_{k}=\mathcal{D}_{k-1} \cup\left\{\left(\mathbf{x}_k, \mathbf{y}_k\right)\right\}$
  \ENDFOR
  \end{algorithmic}
  \caption{TVS{\small AFE}O{\small PT}}
  \label{alg: tv_safeopt}
\end{algorithm}

\subsection{Safety Guarantee}
\label{subsec:safety_guarantees}

To provide safety guarantees, we need the confidence intervals in \eqref{def: Q_t} to contain the safety functions with high probability for all iterations. Note that the parameter $\beta_k$ in \eqref{def: Q_t} tunes the tightness of the confidence interval. The following lemma guides us to make a proper choice for $\beta_k$: This choice depends on the information capacity $\gamma_k^h$ associated with the kernel $\kappa$, namely is the maximal mutual information \cite{cover1999elements} we can obtain from the GP model of $h$ through $k$ noisy measurements $\hat{h}_{\mathbf{X}_k}$ at data points $\mathbf{X}_k := \{(\mathbf{x}_\tau \in \mathcal{X}, \tau, i_\tau \in \mathcal{I})\}_{\tau <k}$, namely

\begin{equation} \label{def: gamma}
    \gamma_k^h:=\max \limits_{\mathbf{X}_k} I(\hat{h}_{\mathbf{X}_k} ; h).
\end{equation}

\begin{lemma} \label{lem:confidence_interval}
    Assume that $h(\mathbf{x}, t, i)$ has RKHS norm associated with $\kappa$ bounded by $B$ and that measurements are perturbed by $\sigma$-sub-Gaussian noise. Let the variable $\gamma_k^h$ be defined as in \eqref{def: gamma}. For any $\delta\in(0,1)$, let $\sqrt{\beta_k}=B+\sigma \sqrt{2 \left(\gamma^h_{k \cdot |\mathcal{I}|}+1+\ln (1 / \delta) \right)}$, then the following holds for all decisions $\mathbf{x} \in \mathcal{X}$, function indices $i \in \mathcal{I}$, and iterations $k \geq 1$ jointly with probability at least $1-\delta$:
\[\left|h(\mathbf{x}, t, i)-\mu_{k-1}(\mathbf{x}, i)\right| \leq \sqrt{\beta_k} \sigma_{k-1}(\mathbf{x}, i). \]
\end{lemma}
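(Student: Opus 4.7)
The plan is to recast the multi-function setting as a single-function GP on the extended input domain $\mathcal{X}\times\mathcal{T}\times\mathcal{I}$ and then invoke the standard RKHS-norm self-normalized concentration bound of Chowdhury and Gopalan (Theorem~2 in ``On Kernelized Multi-armed Bandits''). Since by Assumption~\ref{ass:regularity} the auxiliary function $h$ has $\|h\|_\kappa\leq B$ under the joint spatio-temporal kernel, every observation $y_k^i = h(\mathbf{x}_k,t,i)+\varepsilon_k^i$ can be viewed as a noisy query of a single RKHS function at the augmented point $(\mathbf{x}_k,t,i)$, with the noise sequence being $\sigma$-sub-Gaussian by Assumption~\ref{ass:noise}.

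First, I would count observations carefully: at each algorithm iteration $\tau$, all $|\mathcal{I}|$ function values are recorded at the chosen decision, so after $k$ algorithm iterations the data set $\mathcal{D}_k$ contains $N:=k\cdot|\mathcal{I}|$ augmented samples. The quantities $\mu_{k-1}(\mathbf{x},i)$ and $\sigma_{k-1}(\mathbf{x},i)$ appearing in \eqref{def: Q_t} are then precisely the standard GP posterior mean and standard deviation computed from these $N$ samples under kernel $\kappa$. Consequently, the information-gain quantity that governs the confidence width is $\gamma_N^h = \gamma^h_{k\cdot|\mathcal{I}|}$, which matches the index appearing in the statement.

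Next, I would apply the Chowdhury--Gopalan inequality: for any $\delta\in(0,1)$ and any (possibly adaptively chosen) sequence of query points in the extended domain, with probability at least $1-\delta$, jointly for every test input $z$ and every sample size $n\geq 1$,
\begin{equation*}
    \bigl|h(z)-\mu_{n-1}(z)\bigr|\leq \left(B+\sigma\sqrt{2\bigl(\gamma_n^h+1+\ln(1/\delta)\bigr)}\right)\sigma_{n-1}(z).
\end{equation*}
Instantiating this bound at $n=k\cdot|\mathcal{I}|$ and $z=(\mathbf{x},t,i)$ yields exactly the claim with the prescribed $\sqrt{\beta_k}$; the joint-in-$(k,\mathbf{x},i)$ probability guarantee follows directly because the cited bound is uniform over all future sample sizes and all test inputs.

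The main subtlety to verify is compatibility of the adaptive selection rule \eqref{def: x_t} with the martingale structure underlying the self-normalized inequality: the query $\mathbf{x}_k$ is a function of $\mathcal{D}_{k-1}$, while by Assumption~\ref{ass:noise} each $\varepsilon_k^i$ is zero-mean and sub-Gaussian conditionally on the past, so the required filtration property holds. A secondary point to check is that $\gamma^h_{k\cdot|\mathcal{I}|}$ in \eqref{def: gamma} is computed with respect to the same joint kernel $\kappa$ on $\mathcal{X}\times\mathcal{T}\times\mathcal{I}$ used to model $h$, which is built into the definition. Once these two consistency checks are made, the lemma reduces to a direct citation and no further calculation is needed.
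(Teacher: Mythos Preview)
Your proposal is correct and aligns with the paper's own argument: the paper simply cites Lemma~1 of \cite{widmer2023tuning} (a contextual extension of Lemma~4.1 in \cite{berkenkamp2021bayesian}) and instantiates it with time as the context, which ultimately rests on the same Chowdhury--Gopalan self-normalized bound you invoke directly on the augmented domain $\mathcal{X}\times\mathcal{T}\times\mathcal{I}$. Your version is slightly more explicit---spelling out why the $k\cdot|\mathcal{I}|$ index arises and verifying the filtration compatibility---but the route is essentially identical.
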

\begin{proof}
    This lemma is a straightforward consequence of Lemma 1 of \cite{widmer2023tuning}, a contextual extension of Lemma 4.1 of \cite{berkenkamp2021bayesian}. We can prove it by selecting time as the context and picking $\{t\}_{t \geq 1, t \in \mathbb{N}}$ as the context sequence.
\end{proof}

Lemma~\ref{lem:confidence_interval} indicates that, by selecting $\beta_k$ properly, the confidence intervals $Q_k$ will w.h.p. contain the reward function and the safety functions. Due to this, they can be leveraged to provide theoretical guarantees for safety and optimality. 

The following theorem provides a sufficient condition for safety of TVS{\small AFE}O{\small PT}. 

\begin{theorem} \label{thm:safety_tvsafeopt}
    Let Assumptions \ref{ass:regularity} - \ref{ass:S_0} hold, and let $\gamma_k^h$ be defined as in \eqref{def: gamma}. For any $\delta\in(0,1)$, let $\sqrt{\beta_k}=B+\sigma \sqrt{2 \left(\gamma^h_{k \cdot |\mathcal{I}|}+1+\ln (1 / \delta) \right)}$, then TVS{\small AFE}O{\small PT} guarantees that with probability at least $1 - \delta$, for all $i \in \mathcal{I}_c$ and for all $t \geq 0,$ and  $ \mathbf{x} \in S_k$ it holds $c_i(\mathbf{x}, t)~\geq~0$.
\end{theorem}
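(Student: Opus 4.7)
The plan is to separate the probabilistic content from the deterministic safe-set propagation. First I would invoke Lemma~\ref{lem:confidence_interval} with the specified $\beta_k$ to fix, once and for all, a single event $E$ of probability at least $1-\delta$ on which $h(\mathbf{x},t,i) \in Q_k(\mathbf{x},i)$ holds simultaneously for every $\mathbf{x} \in \mathcal{X}$, $i \in \mathcal{I}$, and $k \geq 1$. All subsequent claims are then made pointwise on $E$, so the $1-\delta$ failure budget in the theorem is exactly the failure budget of the lemma.

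The heart of the argument is an induction on $k$ establishing the invariant $c_i(\mathbf{x}, t) \in C_k(\mathbf{x}, i)$ for all $\mathbf{x} \in \mathcal{X}$ and $i \in \mathcal{I}_c$, where $t$ is the time paired with iteration $k$. Base case $k=0$: the initialization $C_0(\mathbf{x},i) = [L(0), \infty)$ on $S_0$ (and $\mathbb{R}$ elsewhere) together with Assumption~\ref{ass:S_0} yields the containment. Inductive step: I would combine (i) the time-Lipschitz bound from Assumption~\ref{ass:regularity}, which gives $|c_i(\mathbf{x},t) - c_i(\mathbf{x}, t-1)| \leq L(t-1)$ and hence, via the inductive hypothesis, $c_i(\mathbf{x}, t) \in C_{k-1}(\mathbf{x}, i) \oplus [-L(t-1), L(t-1)]$; with (ii) the fact that on $E$ we also have $c_i(\mathbf{x}, t) \in Q_k(\mathbf{x}, i)$. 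Intersecting these two containments reproduces exactly the recursive definition \eqref{def: C_t}, so $c_i(\mathbf{x}, t) \in C_k(\mathbf{x}, i)$, and in particular $l_k(\mathbf{x}, i) \leq c_i(\mathbf{x}, t)$.

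The safety conclusion is then a direct geometric calculation from the safe-set update \eqref{def: S_t} and the $L_\mathbf{x}$-Lipschitz continuity of $c_i$. Fix any $\mathbf{x}' \in S_k$ and $i \in \mathcal{I}_c$; by \eqref{def: S_t} there exists some $\mathbf{x} \in S_{k-1}$ with $l_k(\mathbf{x}, i) - L_\mathbf{x} d(\mathbf{x}, \mathbf{x}') - L(t) \geq 0$. Chaining the spatial-Lipschitz inequality with the lower-bound property established above gives
\begin{align*}
c_i(\mathbf{x}', t) \geq c_i(\mathbf{x}, t) - L_\mathbf{x} d(\mathbf{x}, \mathbf{x}') \geq l_k(\mathbf{x}, i) - L_\mathbf{x} d(\mathbf{x}, \mathbf{x}') \geq L(t) \geq 0.
\end{align*}
The case $k=0$ is immediate from Assumption~\ref{ass:S_0}, and since $k$ was arbitrary the conclusion holds for every $t \geq 0$ with probability at least $1-\delta$.

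The main obstacle I anticipate is the bookkeeping of the time-propagation step. The Minkowski sum in \eqref{def: C_t} must widen $C_{k-1}$ by exactly $L(t-1)$ in each direction, and the induction is valid only because $C_{k-1}$ is already interpreted as a confidence set for time $t-1$ rather than for any earlier time at which data were observed. The extra slack $L(t)$ appearing in the safe-set update \eqref{def: S_t}, absent from standard S{\small AFE}O{\small PT}, is precisely what preserves enough margin for this propagation to go through at the next iteration, so one must be careful to apply the time-Lipschitz and space-Lipschitz constants each at the correct step, without double-counting and without letting the base-case initialization on $S_0$ slip out of sync with Assumption~\ref{ass:S_0}.
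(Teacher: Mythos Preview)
Your proposal is correct and follows essentially the same approach as the paper: the paper first isolates your inductive containment $h(\mathbf{x},t,i)\in C_k(\mathbf{x},i)$ as a standalone lemma (proved exactly as you describe, via the time-Lipschitz bound plus Lemma~\ref{lem:confidence_interval}), and then derives safety from the safe-set recursion \eqref{def: S_t} and the spatial Lipschitz constant through the same chain of inequalities you wrote down. The only cosmetic difference is that you inline the lemma and swap the roles of $\mathbf{x}$ and $\mathbf{x}'$ in the final chain.
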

The proof builds on Lemma \ref{lem:confidence_interval} to show first that for all $ t \geq 0$, for all $i \in \mathcal{I}$ and for all $\mathbf{x} \in \mathcal{X},$  then $h(\mathbf{x},t, i) \in C_k(\mathbf{x},i) 
$ with high probability. Then using the recursive definition of the safe set from \eqref{def: S_t}, we obtain w.h.p. $c_i(\mathbf{x}, t)~\geq~l_k(\mathbf{x}^{\prime},i) - L_{\mathbf{x}} d(\mathbf{x}, \mathbf{x}^{\prime}) - L(t)\geq~0$, which concludes the proof. For details we refer the reader to Appendix \ref{sec:proof_safety}. 

\subsection{Near-Optimality Guarantee}
\label{subsec:optimality_guarantees}

In many safety critical real world applications, e.g. nuclear power plant operations, medical devices calibration, automated emergency response systems, the reward function is stationary most of the time. The problems are stationary until some changes happen and become stationary again when the systems reach new equilibria \cite{vogt2015detecting}. However, ensuring optimality is non-trivial even when the problem becomes stationary. Suppose the  auxiliary function \eqref{eq: surrogate} becomes stationary in a time interval $[\underline{\phi},\overline{\phi}]$, namely suppose there exist $\overline{\phi}>\underline{\phi}\geq 1$ such that $\forall t_1,t_2\in[\underline{\phi},\overline{\phi}], f(\mathbf{x},t_1)=f(\mathbf{x},t_2)=:\Bar{f}(\mathbf{x})$ and $ c(\mathbf{x},t_1)=c(\mathbf{x},t_2)=:\Bar{c}(\mathbf{x})$, so that  the optimization problem \eqref{eq: tv_opt} becomes
\begin{align} \label{eq: ti_opt}
\begin{aligned}
\max \limits _{\mathbf{x} \in \mathcal{X}} &\;\Bar{f}(\mathbf{x}) \\
\text{  subject to }&\Bar{c}_i(\mathbf{x})\geq 0,\;i \in \mathcal{I}_c.   
\end{aligned}    
\end{align}

We first define the largest safe set expanded from a set $S$ subjective to a measurement error $a$ within
\begin{itemize}
\item a single time step: \[
        R_a(S):=S \cup \left\{\mathbf{x} \in \mathcal{X} \mid \forall i \in \mathcal{I}_c, \exists \mathbf{x}_i^{\prime} \in S, s.t.\; \Bar{c}_i\left(\mathbf{x}_i^{\prime}\right)-L_\mathbf{x}d(\mathbf{x}, \mathbf{x}_i^{\prime})-a \geq 0\right\}
    \]
    \item $n$ time steps: $R_a^n(S):=\underbrace{R_a\left(R_a \ldots R_a\left(R_a\right.\right.}_{n \text { times }}(S))\dots)$
    \item arbitrary time steps: $\bar{R}_a(S):=\lim _{n \rightarrow \infty} R_a^n(S)$
\end{itemize}

We also define $\bar{L}_{\mathrm{t}}$ as an upper bound of the sum of all time Lipschitz constants, that is, $\sum \limits _{\tau = 0}^{T-1} L(\tau) \leq \bar{L}_{\mathrm{t}}$. We find it reasonable that a tight upper bound $\bar{L}_{\mathrm{t}}$ can be provided when the underlying system slowly switches to the new stationarity condition.

Given these definitions, we are now in the position to provide optimality guarantees for TVS{\small AFE}O{\small PT}. In particular, we aim at comparing the found reward value $\bar{f}({\mathbf{x}_k})$ with the optimal reward value within the largest safe set  obtained in ideal conditions, namely with no measurement error, $\bar{R}_0(S_0)$. We also aim at  providing TVS{\small AFE}O{\small PT} with an upper bound on the iterations needed to find a near-optimal solution. The following theorem states the optimality guarantee of TVS{\small AFE}O{\small PT}.

\begin{theorem} \label{thm:optimality_tvsafeopt}
    Let Assumptions \ref{ass:regularity} - \ref{ass:S_0} hold, let $\gamma_k^h$ be defined as in \eqref{def: gamma} and,  for any $\delta\in (0,1)$, let $\sqrt{\beta_k}=B+\sigma \sqrt{2 \left(\gamma^h_{k \cdot |\mathcal{I}|}+1+\ln (1 / \delta) \right)}$. Define $\hat{\mathbf{x}}_k$ as in (\ref{def: x_t_hat}), and, for any $\epsilon>0$, let $k^*(\epsilon, \delta)$ be the smallest positive integer satisfying
\[
\frac{k^*}{\beta_{k^*} \gamma^h_{k^*}} \geq \frac{b_1\left(\left|\bar{R}_0\left(S_0\right)\right|+1\right)}{\epsilon^2},
\]
where $b_1=8 / \log \left(1+\sigma^{-2}\right)$. Then, the TVS{\small AFE}O{\small PT} algorithm, applied to \eqref{eq: ti_opt}, guarantees that, with probability at least $1-\delta$, there exists $ k \leq k^*$ such that
\[ \Bar{f}\left(\hat{\mathbf{x}}_k\right) \geq \max _{\mathbf{x} \in \bar{R}_{\epsilon+\bar{L}_\mathrm{t}}\left(S_0\right)} \Bar{f}(\mathbf{x})-\epsilon.\]
\end{theorem}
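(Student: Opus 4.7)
The plan is to adapt the optimality argument of S{\small AFE}O{\small PT}~\cite{sui2015safe}, in the contextual form proved in~\cite{berkenkamp2021bayesian, widmer2023tuning}, to our time-varying setting; the only genuinely new ingredient is the bookkeeping for the $\bar{L}_{\mathrm{t}}$ inflation that arises from the Minkowski-sum recursion in~\eqref{def: C_t}. As a first step, I would invoke Lemma~\ref{lem:confidence_interval} together with induction on $k$ and the recursion~\eqref{def: C_t} to show that, on a single high-probability event of measure at least $1-\delta$, the containment $h(\mathbf{x}, t, i)\in C_k(\mathbf{x}, i)$ holds for all $\mathbf{x}\in\mathcal{X}$, $i\in\mathcal{I}$ and $k\geq 1$; this is the event that already underlies Theorem~\ref{thm:safety_tvsafeopt}, so from now on I fix it and argue path-wise. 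Because $\bar f$ and $\bar c_i$ coincide with $f(\cdot,t)$ and $c_i(\cdot,t)$ throughout $[\underline{\phi},\overline{\phi}]$, within this interval $l_k(\mathbf{x},i)$ and $u_k(\mathbf{x},i)$ are valid lower and upper bounds on the stationary reward and constraint functions.

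The second ingredient is the classical GP information-gain bound $\sum_{k=1}^{K}\sigma_{k-1}(\mathbf{x}_k,i_k)^2\leq (2/\log(1+\sigma^{-2}))\,\gamma^h_{K|\mathcal{I}|}$, which via $w_k(\mathbf{x}_k,i_k)^2\leq 4\beta_k\sigma_{k-1}(\mathbf{x}_k,i_k)^2$ controls the cumulative squared widths at queried points. A pigeon-hole argument analogous to the sample-complexity analysis of~\cite{sui2015safe, berkenkamp2021bayesian} then shows that the assumption $k^*/(\beta_{k^*}\gamma^h_{k^*})\geq b_1(|\bar R_0(S_0)|+1)/\epsilon^2$ forces the existence of some $k\leq k^*$ at which $\max_{\mathbf{x}\in G_k\cup M_k,\,i\in\mathcal{I}} w_k(\mathbf{x},i)\leq\epsilon$.

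The third step is a monotonicity argument for the TVS{\small AFE}O{\small PT} safe set. I would show by induction on the expansion index $n$ that, as soon as the widths over $G_k\cup M_k$ fall below $\epsilon$, the safe set produced by~\eqref{def: S_t} contains $R^n_{\epsilon+\bar L_{\mathrm{t}}}(S_0)$, hence $S_k\supseteq\bar R_{\epsilon+\bar L_{\mathrm{t}}}(S_0)$ in the limit. The crucial observation is that the cumulative Minkowski inflation of $C_k$ by $\pm L(\tau)$ over $\tau<k$ is bounded by $\sum_\tau L(\tau)\leq\bar L_{\mathrm{t}}$, which is exactly the extra slack introduced by the measurement-error parameter $\epsilon+\bar L_{\mathrm{t}}$ in $\bar R$. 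Combined with the definitions of $M_k$ and $\hat{\mathbf{x}}_k$ in~\eqref{def: x_t_hat}, this yields
\[
\bar f(\hat{\mathbf{x}}_k)\;\geq\;l_k(\hat{\mathbf{x}}_k,0)\;\geq\;\max_{\mathbf{x}\in S_k} l_k(\mathbf{x},0)\;\geq\;\max_{\mathbf{x}\in\bar R_{\epsilon+\bar L_{\mathrm{t}}}(S_0)}\bar f(\mathbf{x})-\epsilon,
\]
which is the claimed bound.

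The main obstacle will be this third step: rigorously quantifying how the accumulated time-Lipschitz inflation of $l_k$ erodes the expandable safe region and showing that this erosion is captured precisely by enlarging the measurement-error parameter of $R$ from $0$ to $\epsilon+\bar L_{\mathrm{t}}$, rather than by a larger quantity. A precondition I would state explicitly is that the stationary window $[\underline{\phi},\overline{\phi}]$ is long enough to contain the iteration $k\leq k^*$ at which the width condition is first met; without this, the algorithm would not spend enough consecutive iterations in the stationary regime to exploit its learned model.
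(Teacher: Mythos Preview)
Your overall plan---high-probability containment, information-gain width control, then a safe-set expansion argument with the $\bar L_{\mathrm t}$ slack---matches the paper's strategy. The gap lies in your third step, and it is not merely cosmetic.

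First, you write ``monotonicity argument for the TVS{\small AFE}O{\small PT} safe set'' and propose to induct on $n$ to show $S_k\supseteq R^n_{\epsilon+\bar L_{\mathrm t}}(S_0)$. But $S_k$ itself is \emph{not} monotone in $k$: the algorithm is explicitly allowed to shrink it (see the discussion after~\eqref{def: S_t}), because the subtracted margin $L(t)$ and the drifting lower bounds $l_k$ can make previously safe points drop out. So no induction on the expansion index can be carried out directly on $S_k$. The paper handles this by introducing an auxiliary lower bound $\tilde l_k$ obtained by running the recursion~\eqref{def: C_t} with $L(t)\equiv 0$, and the corresponding auxiliary set $\underline S_k$ built from $\tilde l_k$ with margin $\bar L_{\mathrm t}$. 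One then proves $\tilde l_k - \bar L_{\mathrm t}\leq l_k\leq \tilde l_k$, hence $\underline S_k\subseteq S_k$, and crucially $\underline S_{k-1}\subseteq \underline S_k$. All the expansion bookkeeping is done on $\underline S_k$, and only at the very end is the inclusion $\underline S_k\subseteq S_k$ used. Your ``crucial observation'' about the cumulative Minkowski inflation being bounded by $\bar L_{\mathrm t}$ is exactly the right idea, but it needs to be packaged as this sandwich lemma rather than applied directly to $S_k$.

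Second, your pigeon-hole step extracts a \emph{single} iteration $k\leq k^*$ with $\max_{G_k\cup M_k} w_k\leq\epsilon$, and then claims that at this $k$ the safe set already contains all of $\bar R_{\epsilon+\bar L_{\mathrm t}}(S_0)$. That does not follow: one small-width iteration forces at most one expansion step of $\underline S_k$ (if it has not yet stabilised). The factor $|\bar R_0(S_0)|+1$ in the definition of $k^*$ is there precisely because you need up to $|\bar R_0(S_0)|$ such blocks, each of length $T_k$ where $T_k/(\beta_{k+T_k}\gamma^h_{k+T_k})\geq b_1/\epsilon^2$, to drive $\underline S_k$ from $S_0$ all the way to $\bar R_{\epsilon+\bar L_{\mathrm t}}(S_0)$. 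The correct structure is: (i) within each block there is a small-width iteration; (ii) if $\underline S_k\not\supseteq \bar R_{\epsilon+\bar L_{\mathrm t}}(S_0)$ then that iteration forces $\underline S_{k+T_k}\supsetneq\underline S_k$; (iii) since $\underline S_k\subseteq\bar R_{\bar L_{\mathrm t}}(S_0)\subseteq\bar R_0(S_0)$, at most $|\bar R_0(S_0)|$ strict expansions can occur, so some block sees $\underline S_k$ stabilise; (iv) at stabilisation, the contrapositive of (ii) gives $\bar R_{\epsilon+\bar L_{\mathrm t}}(S_0)\subseteq\underline S_{k'}\subseteq S_{k'}$, and your final chain of inequalities then goes through.
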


The proof consists in showing a decaying upper bound of uncertainty $w_k(\mathbf{x},i)\leq \epsilon$ and exploiting local stationarity of \eqref{eq: ti_opt} to provide bounds on the expansion of the safe set $S_k$. Details can be found in Appendix \ref{sec:proof_optimality}.

\begin{figure} [ht]
  \centering
  \includegraphics[width=\linewidth]{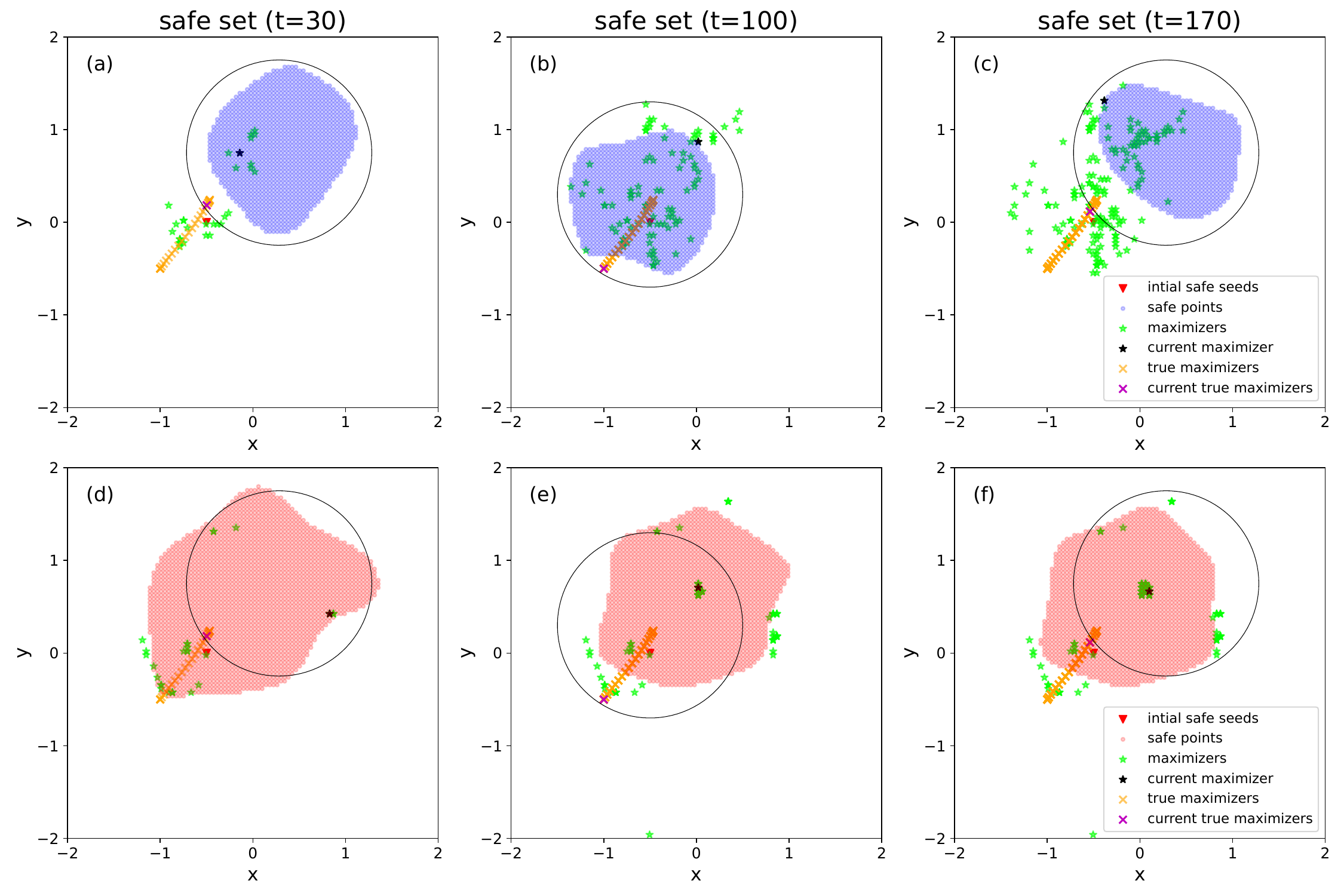}  \vspace{\baselineskip}
  \caption{Comparison of safe sets computed by TVS{\small AFE}O{\small PT} (top row) and S{\small AFE}O{\small PT} (bottom row) at $t=30, \;t=100, \text{and } t=170$. Because TVS{\small AFE}O{\small PT} takes the possible changes in time into consideration, the safe sets computed by TVS{\small AFE}O{\small PT} are contained in the ground truth safe regions while those computed by S{\small AFE}O{\small PT} have multiple violations.}
  \label{fig:toy_example_safe_set}
\end{figure}

\section{Experiments}
\label{sec:Experiments}
\subsection{Synthetic Example   }
We first illustrate TVS{\small AFE}O{\small PT} on a synthetic two-dimensional time-varying optimization problem
\begin{align*}
\max\limits_{x, y} & -e^{x^2}-\log(1+y^2)+0.01 t \\
\text { s.t. } & \left[x+0.5-0.5\left(1-\cos \frac{2 \pi}{50} t\right) \cos \frac{\pi}{6}\right]^2 + \left[y-0.3-0.5\left(1-\cos \frac{2 \pi}{50} t\right) \sin \frac{\pi}{6}\right]^2 \leq 1.    \notag 
\end{align*}

Figure~\ref{fig:toy_example_safe_set} compares the safe sets computed by TVS{\small AFE}O{\small PT} and S{\small AFE}O{\small PT} at $t=30,$ $t=100$ and $t=170$. Both algorithms start from the same singleton initial safe set $\{(-0.5, 0.0)\}$. Implementation details are described in Appendix~\ref{sec:ExperimentalDetails}. Figure~\ref{fig:toy_example_safe_set} illustrates that the safe sets computed by TVS{\small AFE}O{\small PT} are contained in the ground truth safe regions while those computed by S{\small AFE}O{\small PT} have multiple violations. Due to the dependence on time of the example (Figure~\ref{fig:optimality}), the initial safe set becomes unsafe at $t=30,\; \text{and } t=170$. Taking the possible changes in time into consideration, TVS{\small AFE}O{\small PT} correctly identifies the possible unsafety of the initial safe set. In contrast, S{\small AFE}O{\small PT} always consider the initial safe set to be safe. This toy example indicates that, in contrast to S{\small AFE}O{\small PT},  TVS{\small AFE}O{\small PT} safely adapts to the time changes of the optimization problem.

In this example, TVS{\small AFE}O{\small PT} overall finds better reward function values than S{\small AFE}O{\small PT}, see Figure~\ref{fig:optimality}. The cumulative regret of TVS{\small AFE}O{\small PT} is below that of S{\small AFE}O{\small PT} by 77.3\%. The reward function value found by TVS{\small AFE}O{\small PT} is close to the optimal values when the reward function changes slowly, which supports \Cref{thm:optimality_tvsafeopt}.

The assumed standard deviation of the noise, which is the only source of uncertainty, is 0.01 and thus leads to negligible error bars.

\begin{figure}[t]
  \centering
  \includegraphics[width=\linewidth]{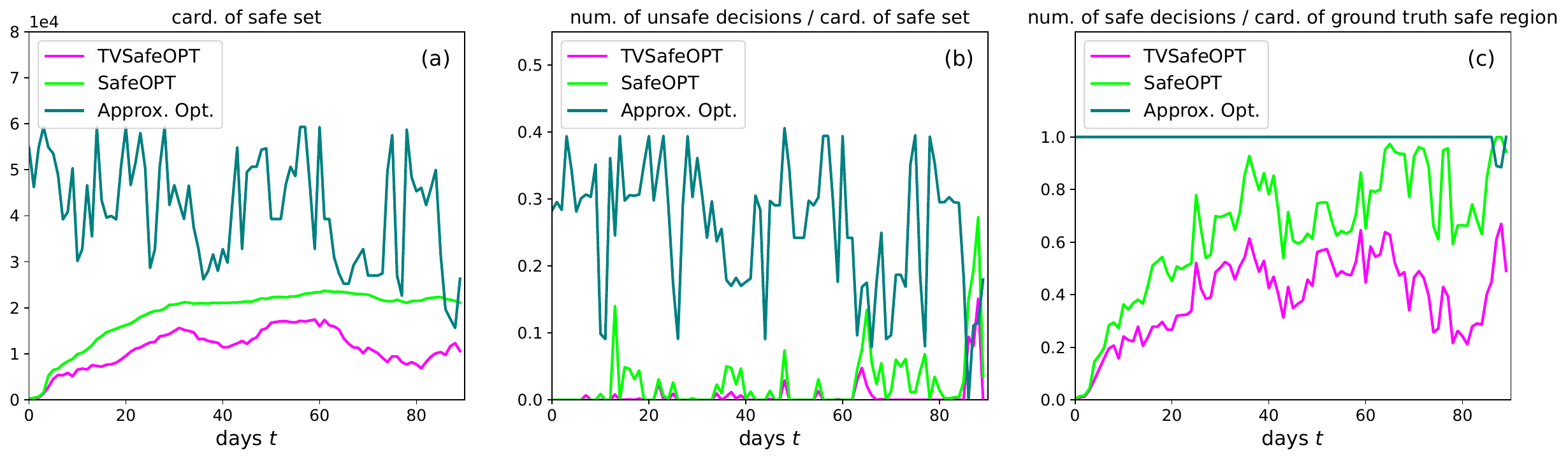}
  \vspace{\baselineskip}
  \caption{Comparison between TVS{\small AFE}O{\small PT}, S{\small AFE}O{\small PT}, and approximate optimization on the gas compressor case study. (a): The cardinality of the safe sets, (b): The ratio between the number of unsafe decisions in the safe sets and  the cardinality of the safe sets, (c): The ratio between the number of safe decisions in the safe sets and  the cardinality of the ground truth safe regions. TVS{\small AFE}O{\small PT} robustly shrinks its safe sets based on its observations and thus maintains much less violations in its safe sets than S{\small AFE}O{\small PT} and approximate optimization. It achieves this benefit at the cost of covering less of the ground truth safe region. }
  \label{fig:comparison}
\end{figure}

\subsection{Gas Compressor Case Study   }

\subsubsection{Problem Setup}
We show the performance of the proposed algorithm in a compressor station with three identical compressors operating in parallel at the time-varying compressor head $H_t$ with time-varying power consumption at time $t$ (adapted from \cite{Influence_Zagorowska2019}, details in Appendix \ref{sec:CompressorCaseSetup})
\begin{align}
    \min_{m_{i}}&{}\sum_{i=1}^N \frac{1}{1-d_{it}}\left(\alpha_1+\alpha_2\tilde{m}_{i}+\alpha_3\tilde{H}_t+\alpha_4\tilde{m}_{i}^2+\alpha_5\tilde{m}_{i}\tilde{H}_t+\alpha_6\tilde{H}_t^2 \right)&\label{eq:PowerDegraded}\\
    \text{s.t. } \sum_{i=1}^N m_{i} \geq & M_t&\label{eq:Demand}\\
    m_{i}\geq&{} \beta_1\bar{H}_t^2+\beta_2\bar{H}_t+\beta_3 \;, \forall i=1,\ldots,N&\label{eq:Surge}\\
    m_{i}\geq &{}\gamma_1\bar{\bar{H}}_t^2+\gamma_2\bar{\bar{H}}_t+\gamma_3 \;, \forall i=1,\ldots,N& \notag \\
    m_{i}\leq &{}\delta_1\tilde{\tilde{H}}_t+\delta_2 \;, \forall i=1,\ldots,N& \notag \\
    m_{i}\leq&{} \sigma_1\tilde{\bar{H}}_t^2+\sigma_2\tilde{\bar{H}}_t+\sigma_3 \;, \forall i=1,\ldots,N,&\label{eq:MaxSpeed}
\end{align}
where the objective \eqref{eq:PowerDegraded} corresponds to the power to run the station with $N$ compressors, here $N=3$, affected by individual degradation $d_{it}$, $i=1,l\dots,N$. The station must also satisfy time-varying demand $M_t$ in \eqref{eq:Demand}. In practice, it is common to linearly approximate \eqref{eq:Surge}-\eqref{eq:MaxSpeed} with respect to the compressor head $H_t$ (dashed lines in Figure~\ref{fig:linear_approx}) \cite{Experimental_Cortinovis2015}. 

\subsubsection{Results}
We compare the performance of TVS{\small AFE}O{\small PT}, S{\small AFE}O{\small PT}, and approximate optimization. Implementation details are described in Appendix~\ref{sec:ExperimentalDetails}. Figure~\ref{fig:comparison} compares the number of unsafe decisions in the safe sets calculated by TVS{\small AFE}O{\small PT}, S{\small AFE}O{\small PT}, and approximate optimization. We see that, by considering the uncertainty with respect to the decision variables, S{\small AFE}O{\small PT} maintains fewer unsafe decisions in its safe sets than the approximate optimization. However, S{\small AFE}O{\small PT} tends to expand its safe sets regardless of external changes.  TVS{\small AFE}O{\small PT} further improves this based on S{\small AFE}O{\small PT} by taking into consideration the time-varying safety functions. TVS{\small AFE}O{\small PT} robustly shrinks its safe sets based on its observations and thus maintains much less violations in its safe sets than S{\small AFE}O{\small PT} (73.9\%) and approximate optimization (97.6\%). It achieves this benefit at the cost of covering less of the ground truth safe region than S{\small AFE}O{\small PT} (40.0\%) and approximate optimization (61.4\%).

The right-hand side of Figure~\ref{fig:optimality} shows that TVS{\small AFE}O{\small PT} preserves safety at the expense of optimality. In the compressor case study, TVS{\small AFE}O{\small PT} overall finds lower reward function values than S{\small AFE}O{\small PT} and approximate optimization, which is consistent with the fact that it covers a lower fraction of the ground truth safe regions and the reward function changes significantly between iterations. Because of its strong focus on safety, TVS{\small AFE}O{\small PT} deviates more from the ground truth. The cumulative regret of TVS{\small AFE}O{\small PT} is above that of S{\small AFE}O{\small PT} by 74.9\%, and that of approximate optimization by 208.1\%. This illustrates the trade-off between safety and optimality in the presence of strong uncertainties due to the varying reward and safety constraints. 

\begin{figure} [t]
  \centering
    \begin{subfigure}[b]{0.4\textwidth}
        \centering
        \includegraphics[width=\textwidth]{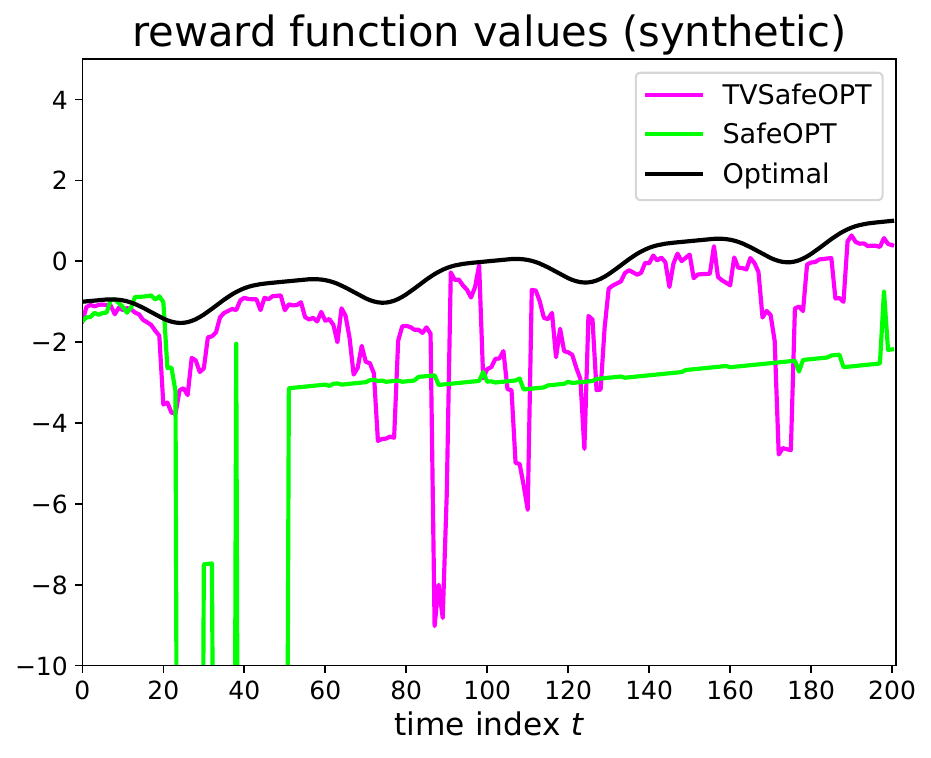}
    \end{subfigure}
    \hfill
    \begin{subfigure}[b]{0.4\textwidth}
        \centering
        \includegraphics[width=\textwidth]{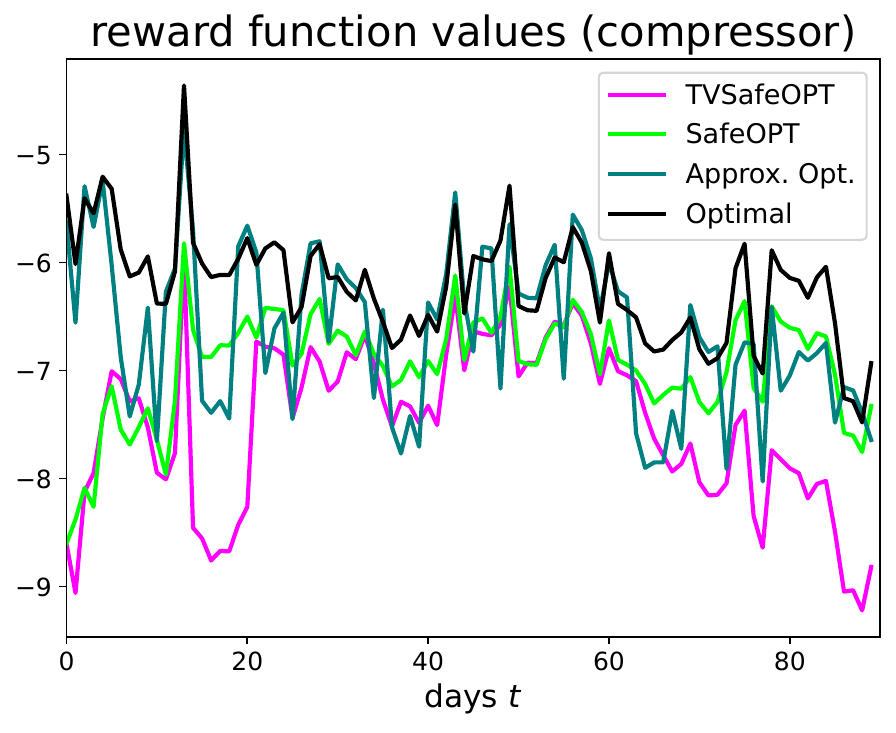}
    \end{subfigure}
    \vspace{\baselineskip}
  \caption{Reward function values found by TVS{\small AFE}O{\small PT} and S{\small AFE}O{\small PT}, compared with the optimal reward function values, in the synthetic example (left), and in the compressor case study (right). In the synthetic example, TVS{\small AFE}O{\small PT} finds better reward function values than S{\small AFE}O{\small PT}. The reward function value found by TVS{\small AFE}O{\small PT} is close to the optimum when the reward function changes slowly, which supports the theoretical result. In the compressor case study, TVS{\small AFE}O{\small PT} finds lower reward function values, since it is focused on finding robust solutions adapting to significant changes of the reward function. \label{fig:optimality}}
\end{figure}

Similarly to the synthetic example, the standard deviation of the noise is 0.01 and the error bars have been omitted.

\section{Limitations and Conclusion}
\label{sec:Limitations}
\textbf{Limitations}
\looseness -1
The compressor case study demonstrated that TVS{\small AFE}O{\small PT} ensures safety at the expense of optimality if the stationarity assumption is not satisfied. The assumption about the local stationarity of the optimization problem \eqref{eq: tv_opt} is thus the main limitation. Even though TVS{\small AFE}O{\small PT} demonstrates good empirical performance with respect to safety even when the problem is non-stationary, theoretical guarantee for its near-optimality in the non-stationary case warrant further investigation.

The need for obtaining the Lipschitz constants with respect to both $\mathbf{x}$ and $t$ in order to compute the safe set $S_k$ in \eqref{def: S_t} may prove limiting in real applications. To overcome this limitation, we propose practical modifications in \cref{sec:Practical}.

\textbf{Conclusions}
\looseness -1
We propose TVS{\small AFE}O{\small PT} algorithm, which extends S{\small AFE}O{\small PT} to deal with time-varying optimization problems. In conclusion, TVS{\small AFE}O{\small PT} outperforms S{\small AFE}O{\small PT} in terms of adaptation to changes in time and  maintains fewer unsafe decisions in its safe sets for time-varying problems. This is at the cost of covering less of the ground truth safe regions and may lead to poorer performance in terms of optimality.

We prove the safety guarantee for TVS{\small AFE}O{\small PT} in the general time-varying setting and  prove its near-optimality guarantee for the case in which the optimization problem becomes stationary. The two theoretical results together guarantee that TVS{\small AFE}O{\small PT} is capable of safely transferring safety of the decisions into the future and, based on the transferred safe sets, it will find the near-optimal decision when the reward function stops changing. We show that TVS{\small AFE}O{\small PT} performs well in practice for the most general settings where both the reward function and the safety constraint are time-varying, both on synthetic data and for real case study on a gas compressor.

\bibliography{tvsafeopt}
\newpage
\appendix

\section{Experiment Details}
\label{sec:ExperimentalDetails}

Experiments are conducted on an Intel i7-11370H CPU using Python 3.8.5. The implementation utilizes the following libraries: GPy 1.12.0, NumPy 1.22.0, and Matplotlib 3.5.0.

\subsection{Practical Modifications}
\label{sec:Practical}
In practice, Lipschitz constants are difficult to estimate. Thus, here we provide a Lipschitz-constant-free version of TVS{\small AFE}O{\small PT} algorithm by modifying \eqref{def: S_t} and \eqref{def: e_t}.

The safe set is updated as all decisions with non-negative lower confidence bounds for the safety functions at the current iteration $k$, that is,

\begin{equation} \label{def: modified_S_t}
    S_k=\left\{\mathbf{x} \in \mathcal{X} \mid \forall i \in \mathcal{I}_c, l_k(\mathbf{x}, i) \geq 0\right\}.
\end{equation}

Furthermore, the expanders are intuitively defined as decisions within the current safe set such that, by evaluating any of the decisions, at least one decision outside the current safe set will be considered as safe, that is, $G_k=\left\{\mathbf{x} \in S_k \mid e_k(\mathbf{x})>0\right\}$, where $e_k(\mathbf{x})$ denotes the number of decisions outside $S_k$ that will be considered safe when evaluating $\mathbf{x}$. Instead of using Lipschitz constants, we define $e_k(\mathbf{x})$ using lower bound of auxiliary GP similar to the method by \citet{berkenkamp2016safe}

\begin{equation*} \label{def: modified_e_t}
    e_k(\mathbf{x})=\left|\left\{\mathbf{x}^{\prime} \in \mathcal{D} \backslash S_k \mid \exists i \in \mathcal{I}_c: l_{k,\left(\mathbf{x}, u_k(\mathbf{x}, i)\right)}\left(\mathbf{x}^{\prime}, k+1, i\right) \geq 0\right\}\right|,
\end{equation*}

where $l_{k,\left(\mathbf{x}, u_k(\mathbf{x}, i)\right)}\left(\mathbf{x}^{\prime}, k+1, i\right)$ denotes the lower bound of the function values at $\mathbf{x}$ and $t = k+1$ if $\mathbf{x}$ is evaluated at the $k$-th iteration and the upper bound is observed.

\subsection{Synthetic Example}
\label{sec:SyntheticSetup}
The search space is $\mathcal{X} = [-2, 2]^2$, uniformly quantized into $100 \times 100$ points. Both algorithms start with the singleton initial safe set $\{(-0.5,0.0)\}$. The measurements are perturbed by i.i.d. Gaussian noise $\mathcal{N}(0, 0.01^2)$. 

The reward function is formulated as: $f(\mathbf{x},t) = -e^{x^2}-\log(1+y^2)+0.01 t$;

The safety function is formulated as: $c_1(\mathbf{x},t) = 1 - \left[x+0.5-0.5\left(1-\cos \frac{2 \pi}{50} t\right) \cos \frac{\pi}{6}\right]^2 - \left[y-0.3-0.5\left(1-\cos \frac{2 \pi}{50} t\right) \sin \frac{\pi}{6}\right]^2$.

The hyperparameters of GPs in the synthetic case study are modelled as follows,

\begin{itemize}
    \item TVS{\small AFE}O{\small PT}: The reward function and the safety function are modeled by independent GPs with zero mean and spatio-temporal kernel $\kappa((\mathbf{x}, t),(\mathbf{x}^{\prime}, t^{\prime})) = \exp\left(- \frac{\|\mathbf{x} -\mathbf{x}^{\prime}\|_2^2}{2\sigma_1^2}\right) \cdot \exp\left(- (\frac{t -t^{\prime})^2}{2\sigma_2^2}\right)$, where $\sigma_1 \equiv 1.0$, $\sigma_2=25.0$ for $f$, and $\sigma_2=15.0$ for $c_1$.
    \item S{\small AFE}O{\small PT}: The reward function and the safety function are modeled by independent GPs with zero mean and 2d Gaussian kernel $\kappa(\mathbf{x},\mathbf{x}^{\prime}) = \exp\left(- \frac{\|\mathbf{x} -\mathbf{x}^{\prime}\|_2^2}{2\sigma_3^2}\right)$, where $\sigma_3 \equiv 1.0$.
\end{itemize}

\subsection{Compressor Case Study}
\label{sec:CompressorCaseSetup}

Centrifugal compressors are often used in gas transport networks to deliver the required amount of gas by boosting the pressure in the pipelines. Organised as compressor stations with $N$ units, the compressors are often operated to minimise their power consumption $P$ while satisfying the demand $M_t$ and operating constraints, capturing how compressor head $H_t$ depends on the mass flow through the compressor \cite{Gas_Kurz2011,Real_Milosavljevic2020}:
\begin{itemize}
    \item $\tilde{m}_{i}=\frac{m_{i}-157.4}{34.37}$, $\tilde{H}_t=\frac{H_t-1.016e5}{3.210e4}$, $\alpha_1=1.979e7$, $\alpha_2=5.274e6$, $\alpha_3=5.375e6$, $\alpha_4=6.055e5$, $\alpha_5=5.718e5$, $\alpha_6=3.319e5$
    \item $\bar{H}_t=\frac{H_t-1.235e5}{3.764e4}$, $\beta_1=-1.953$, $\beta_2=16.86$, $\beta_3=118.1$
    \item $\bar{\bar{H}}_t=\frac{H_t-6.152e4}{7002}$, $\gamma_1=-1.516$, $\gamma_2=-11.12$, $\gamma_3=116.9$
    \item $\tilde{\tilde{H}}_t=\frac{H_t-8.706e4}{5.289e4}$, $\delta_1=73.21$, $\delta_2=183.7$
    \item $\tilde{\bar{H}}_t=\frac{H_t-1.572e5}{2.044e4}$, $\sigma_1=-7.260$, $\sigma_2=-29.65$, $\sigma_3=204.4$
\end{itemize}
The compressor case study has been adapted from \cite{Influence_Zagorowska2019}. The data for the demand, compressor head, and degradation for the three compressors were obtained from \cite{zagorowska2020degradation} (Creative Commons Attribution NonCommercial Licence).

Individual characteristics of compressors in \eqref{eq:Surge}-\eqref{eq:MaxSpeed} are called \emph{compressor maps} (Figure~\ref{fig:linear_approx}). The operating area for a compressor is defined by minimal and maximal speed of the compressor and its mechanical properties. The operating area can be obtained from compressors maps delivered by the manufacturer of the compressor, or estimated during the operation \cite{Model_Kang2018}. However, estimation would require collecting datapoints close to the boundary of the operating area, which may be unavailable due to safety consideration \cite{Online_Cortinovis2016,Minimum_AlZawaideh2022}. Using safe learning has the potential to improve the operation of the station because it enables safe exploration of the unknown operating area of individual compressors. 

\begin{figure} [ht]
  \centering
  \includegraphics[width=0.6\linewidth]{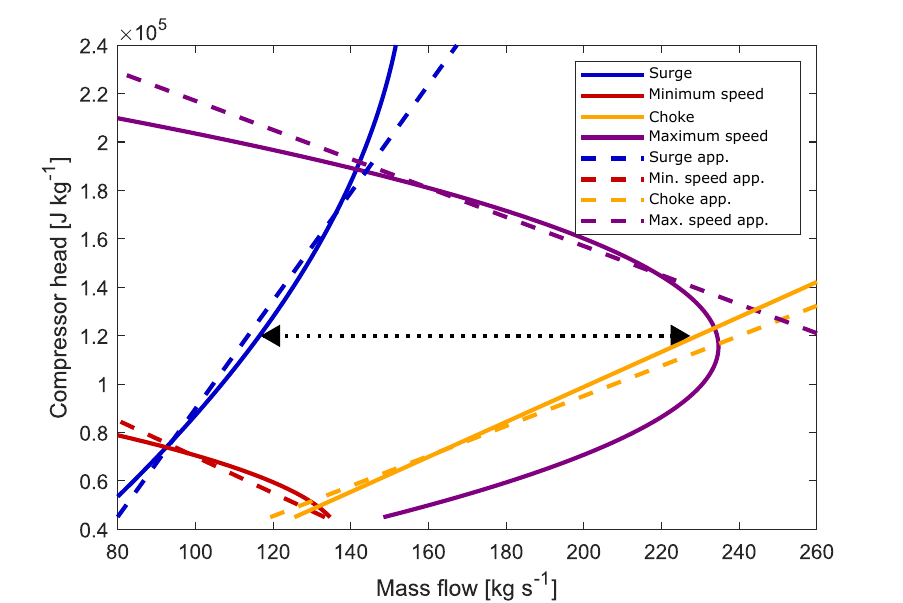}
  \vspace{\baselineskip}
  \caption{Ground truth (solid) and linear approximation (dashed) of the operating area from compressor maps, adapted from \cite{Optimum_Noersteboe2008,Influence_Zagorowska2019}. For a given compressor head at time $t$ (dotted horizontal line for $H_t=120000$ J kg$^{-1}$), the mass flow $m_{it}$ through the $i$-th compressor is required to be between minimum speed (red) and surge (blue) lines, and maximum speed (violet) and choke (yellow) lines}
  \label{fig:linear_approx}
\end{figure}

Furthermore, varying operating conditions and demand often lead to compressor degradation $d_{it}$ (Figure~\ref{fig:compressor_param}), over time increasing power consumption \eqref{eq:PowerDegraded} of the entire compressor station \cite{Degradation_Kurz2009}. Capturing the time-varying aspect of compressor degradation is a subject of research (e.g. \cite{Gas_Li2009,Adaptive_Cicciotti2015,Adaptive_Zagorowska2020}) but limited availability of measured degradation data presents a challenge \cite{sequential_Chen2021}. 

\begin{figure} [ht]
  \centering
  \includegraphics[width=\linewidth]{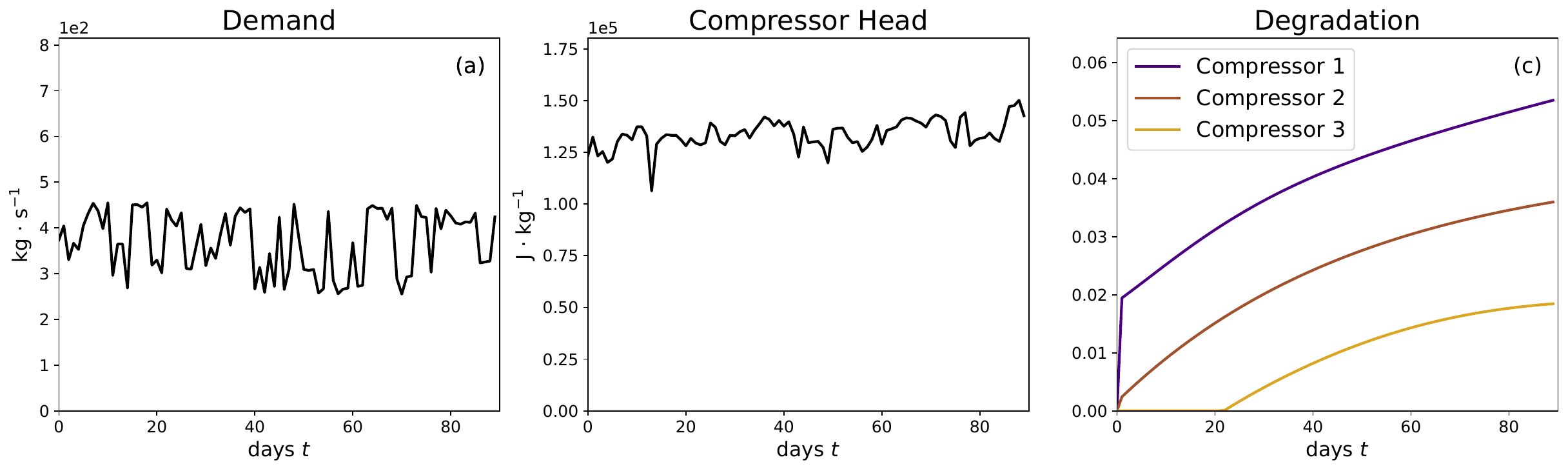}
  \vspace{\baselineskip}
  \caption{Visualization of demand (a), compressor head (b), and degradation for the compressors (c) changing with time.}
  \label{fig:compressor_param}
\end{figure}

For convenience, the optimization variables are scaled by a factor $K = 200$, that is, $\mathbf{x} = (m_1,m_2,m_3)/K$. The search space is $\mathcal{X} = [50.0/K, 250.0/K]^3$, uniformly quantized into $60 \times 60 \times 60$ points. Both algorithms start with the singleton initial safe set $\{(M_0,M_0.M_0)/3K\}$. The measurements are perturbed by i.i.d. Gaussian noise $\mathcal{N}(0, 0.01^2)$. 

The reward function is formulated as: 

\[f(\mathbf{x},t) = -\sum\limits_{i=1}^{3}\frac{1}{(1-d_{it})\cdot 10^7}\left(\alpha_1+\alpha_2\tilde{m}_{i}+\alpha_3\tilde{H}_t+\alpha_4\tilde{m}_{i}^2+\alpha_5\tilde{m}_{i}\tilde{H}_t+\alpha_6\tilde{H}_t^2 \right)\]

The safety functions are formulated as $c_i(x,t)\geq 0$, $i=1,\ldots,7$, with:
\begin{itemize}
    \item $c_1(\mathbf{x},t) = x_1 - L_t$
    \item $c_2(\mathbf{x},t) = U_t -x_1$
    \item $c_3(\mathbf{x},t) = x_2 - L_t$
    \item $c_4(\mathbf{x},t) = U_t - x_2$
    \item $c_5(\mathbf{x},t) = x_3 - L_t$
    \item $c_6(\mathbf{x},t) = U_t - x_3$
    \item $c_7(\mathbf{x},t) = x_1 + x_2 + x_3 - 0.67 M_t / K$,
\end{itemize}
where $L_t = \max\{\beta_1\bar{H}_t^2+\beta_2\bar{H}_t+\beta_3,\gamma_1\bar{\bar{H}}_t^2+\gamma_2\bar{\bar{H}}_t+\gamma_3\}/K$, $U_t = \min\{\delta_1\tilde{\tilde{H}}_t+\delta_2,\sigma_1\tilde{\bar{H}}_t^2+\sigma_2\tilde{\bar{H}}_t+\sigma_3\}/K$.

The hyperparameters of GPs in the compressor case study are modelled as follows,

\begin{itemize}
    \item TVS{\small AFE}O{\small PT}: The reward function and the safety functions are modeled by independent GPs with zero mean and spatio-temporal kernel $\kappa((\mathbf{x}, t),(\mathbf{x}^{\prime}, t^{\prime})) = \exp\left(- \frac{\|\mathbf{x} -\mathbf{x}^{\prime}\|_2^2}{2\sigma_1^2}\right) \cdot \exp\left(- (\frac{t -t^{\prime})^2}{2\sigma_2^2}\right)$, where $\sigma_1 \equiv 1.0$, $\sigma_2=80.0$ for $f$ and $c_1$ - $c_6$, and $\sigma_2=70.0$ for $c_7$.
    \item S{\small AFE}O{\small PT}: The reward function and the safety functions are modeled by independent GPs with zero mean and 3d Gaussian kernel $\kappa(\mathbf{x},\mathbf{x}^{\prime}) = \exp\left(- \frac{\|\mathbf{x} -\mathbf{x}^{\prime}\|_2^2}{2\sigma_3^2}\right)$, where $\sigma_3 \equiv 1.0$.
\end{itemize}

As for approximate optimization, the r.h.s. of \eqref{eq:Surge} - \eqref{eq:MaxSpeed} are linearly approximated as follows:

\begin{itemize}
    \item Surge line: $\beta_1\bar{H}_t^2+\beta_2\bar{H}_t+\beta_3 \approx 4.481e-4 \cdot H_t + 59.76$
    \item Min. speed line: $\gamma_1\bar{\bar{H}}_t^2+\gamma_2\bar{\bar{H}}_t+\gamma_3 \approx -1.333e-3 \cdot H_t + 193.3$
    \item Choke line: $\delta_1\tilde{\tilde{H}}_t+\delta_2 \approx 1.611e-3 \cdot H_t + 46.77$
    \item Max. speed line: $\sigma_1\tilde{\bar{H}}_t^2+\sigma_2\tilde{\bar{H}}_t+\sigma_3 \approx -1.667e-3 \cdot H_t + 461.7$
\end{itemize}

\newpage

\section{Proof of Safety Guarantee}
\label{sec:proof_safety}

Note all following lemmas hold for any  $\delta \in(0,1),$ and $S_0$,  such that $\varnothing \subsetneq S_0 \subseteq \mathcal{X}$.

First, we want to show that the intersected confidence interval $C_k$ in \eqref{def: C_t} w.h.p. contains the reward function and safety functions $h(\mathbf{x},t, i)$ as in \eqref{eq: surrogate}.

\begin{lemma} \label{lem:confidence_interval_tighter}
    Let $\sqrt{\beta_k}=B+\sigma \sqrt{2 \left(\gamma^h_{k \cdot |\mathcal{I}|}+1+\ln (1 / \delta) \right)}$, with $\gamma_k^h$ defined as in \eqref{def: gamma} and  $C_k(\mathbf{x},i)$ defined as in (\ref{def: C_t}), then the following holds with probability at least $1-\delta$ :
\[
 h(\mathbf{x},t, i) \in C_k(\mathbf{x},i) \qquad \forall t \geq 0, \forall i \in \mathcal{I}, \forall \mathbf{x} \in \mathcal{X},
\]
\end{lemma}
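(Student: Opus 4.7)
The plan is to condition on the high-probability event supplied by Lemma~\ref{lem:confidence_interval} and then propagate it forward across iterations by induction, using the time Lipschitz bound from Assumption~\ref{ass:regularity}. With the specified choice of $\beta_k$, Lemma~\ref{lem:confidence_interval} yields a single event $E$ of probability at least $1-\delta$ on which
\[ h(\mathbf{x},t,i)\in Q_k(\mathbf{x},i) \]
simultaneously for every $\mathbf{x}\in\mathcal{X}$, $i\in\mathcal{I}$, and iteration $k\geq 1$, where $t$ is the time step paired with iteration $k$. All remaining steps are deterministic on $E$, so the probability cost is paid only once and the final conclusion inherits the same $1-\delta$ confidence.

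The induction runs on the iteration index $k$. For the base case $k=0$, the inclusion $h(\mathbf{x},0,i)\in C_0(\mathbf{x},i)$ is immediate whenever $i=0$ or $\mathbf{x}\notin S_0$, since in those cases $C_0(\mathbf{x},i)=\mathbb{R}$; for $\mathbf{x}\in S_0$ and $i\in\mathcal{I}_c$, the initialization $C_0(\mathbf{x},i)=[L(0),\infty)$ gives the required inclusion from the quantitative safety margin built into the initial safe seed together with Assumption~\ref{ass:S_0}. For the inductive step, suppose that at the previous iteration $h(\mathbf{x},t-1,i)\in C_{k-1}(\mathbf{x},i)$ for every $\mathbf{x}$ and $i$. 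The temporal Lipschitz part of Assumption~\ref{ass:regularity} gives $|h(\mathbf{x},t,i)-h(\mathbf{x},t-1,i)|\leq L(t-1)$, so
\[ h(\mathbf{x},t,i)\in C_{k-1}(\mathbf{x},i)\oplus[-L(t-1),L(t-1)], \]
and on $E$ we also have $h(\mathbf{x},t,i)\in Q_k(\mathbf{x},i)$. Intersecting these two sets and invoking the recursive definition \eqref{def: C_t} of $C_k$ yields $h(\mathbf{x},t,i)\in C_k(\mathbf{x},i)$, which closes the induction.

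\textbf{Main obstacle.} The argument is essentially a bookkeeping exercise, and the only subtle point is making sure that the probability budget is consumed exactly once: the union bound over $\mathbf{x}$, $i$, and $k$ is already absorbed into the definition of $\beta_k$ (through the $|\mathcal{I}|$-factor in $\gamma^h_{k\cdot|\mathcal{I}|}$ and the $\ln(1/\delta)$ term), so the inductive transport of confidence across iterations does not re-incur any failure probability. A secondary check is that the $L(0)$ offset appearing in the initialization of $C_0$ on $S_0$ is read compatibly with Assumption~\ref{ass:S_0}, so that the base case of the induction is genuine; once this is in place, the inductive step follows directly from the Minkowski-sum construction driven by the time Lipschitz constants and requires no further analysis.
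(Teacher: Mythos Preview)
Your proposal is correct and follows essentially the same route as the paper's proof: condition once on the event from Lemma~\ref{lem:confidence_interval}, establish the base case $k=0$ from the initialization of $C_0$ and Assumption~\ref{ass:S_0}, then push the containment forward by combining the time-Lipschitz bound (which puts $h(\cdot,t,\cdot)$ in the Minkowski-inflated previous interval) with membership in $Q_k$ and intersecting via \eqref{def: C_t}. Your write-up is in fact slightly more explicit than the paper's in isolating the single probability event and in flagging that the $[L(0),\infty)$ initialization on $S_0$ requires reading Assumption~\ref{ass:S_0} with the quantitative margin $c_i(\mathbf{x},0)\geq L(0)$; the paper's own proof of Theorem~\ref{thm:safety_tvsafeopt} uses exactly this reading, so your caveat is well placed.
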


\begin{proof}[Proof by induction] 
\text{  }\\
If $t = 0$, by \Cref{ass:S_0} and the definition of $C_k$ in \eqref{def: C_t}, then $h(\mathbf{x},0, i) \in C_0(\mathbf{x},i),$ for all $i \in \mathcal{I}$ and for all $ \mathbf{x} \in \mathcal{X}.$

Suppose, for any $t = \tau \geq 0$, that  $h(\mathbf{x},\tau, i) \in C_\tau(\mathbf{x},i)$, then for $t = \tau + 1$,  from the Lipschitz continuity of $h$, $|h(\mathbf{x},\tau+1, i) - h(\mathbf{x},\tau, i)| \leq L(\tau)$,  it holds that $h(\mathbf{x},\tau+1, i) \in C_\tau(\mathbf{x},i) \oplus [-L(\tau), L(\tau)]$.

Moreover, by \Cref{lem:confidence_interval} and \eqref{def: Q_t} we have that  $h(\mathbf{x},\tau+1, i) \in Q_{\tau + 1}(\mathbf{x},i)$.

Thus, $h(\mathbf{x},\tau+1, i) \in \left(C_\tau(\mathbf{x},i) \oplus [-L(\tau), L(\tau)]\right) \cap Q_{\tau + 1}(\mathbf{x},i) = C_{\tau+1}(\mathbf{x},i),\; \forall i \in \mathcal{I},\; \forall \mathbf{x} \in \mathcal{X}.$

Therefore, for all $ t \geq 0$, for all $ i \in \mathcal{I}$ and for all $ \mathbf{x} \in \mathcal{X}$ we have that $ h(\mathbf{x},t, i) \in C_k(\mathbf{x},i)$, and this concludes the proof.
\end{proof}

We are now ready to prove \Cref{thm:safety_tvsafeopt} that provides a sufficient condition for TVS{\small AFE}O{\small PT} to ensure safety embedded in the constraints $c_i(\mathbf{x}, t)\geq 0$, for all $i\in \mathcal{I}_c$.

\begin{proof}[Proof of \Cref{thm:safety_tvsafeopt}]
\textit{  }\\
If $t = 0$, by definition of $S_0$, one has $c_i(\mathbf{x}, t) = c_i(\mathbf{x}, 0) \geq L(0) \geq 0$, $\forall i \in \mathcal{I}_c$, $\forall \mathbf{x} \in S_0$.

For any $t \geq 1$, $\forall \mathbf{x} \in S_{t}$, by recursive definition of $S_k$ in \eqref{def: S_t}, $\forall i \in \mathcal{I}_c$, there exists $ \mathbf{x}^{\prime} \in S_{t-1}$, $s.t. \; l_k(\mathbf{x}^{\prime},i) -L_{\mathbf{x}} d(\mathbf{x}, \mathbf{x}^{\prime}) - L(t) \geq 0$.
Then, $\forall i \in \mathcal{I}_c$
\begin{align*}
    \begin{aligned}
        &c_i(\mathbf{x}, t) \\
       \geq & c_i(\mathbf{x}^{\prime}, t) - L_{\mathbf{x}} d(\mathbf{x}, \mathbf{x}^{\prime})  \;\;\;\;\;\;\;\;\;\;\text{by Lipschitz continuity with }\mathbf{x}\\
       \geq & l_k(\mathbf{x}^{\prime},i) - L_{\mathbf{x}} d(\mathbf{x}, \mathbf{x}^{\prime}) \;\;\;\;\;\;\;\;\;\;\;\text{by \Cref{lem:confidence_interval_tighter}} \\
       \geq & l_k(\mathbf{x}^{\prime},i) - L_{\mathbf{x}} d(\mathbf{x}, \mathbf{x}^{\prime}) - L(t) \\
       \geq & 0
    \end{aligned}
\end{align*}
and this conculdes the proof.
\end{proof}

\newpage

\section{Proof of Near-Optimality Guarantee}
\label{sec:proof_optimality}
The proof of near optimality consists in two parts: i) bounding the uncertainty and ii) bounding the expansion of the safe set.

\subsection{Bounding the Uncertainty}
We first derive a decaying upper bound of uncertainty for TVS{\small AFE}O{\small PT}. In this way we can ensure the  uncertainty of the reward function and safety functions to drop below a desired threshold.

\begin{lemma} \label{lem: uncertainty_bound}
    Define $b_1 := 8 / \log \left(1+\sigma^{-2}\right) \in \mathbb{R}$, and $\gamma^h_k$ as in \eqref{def: gamma}. For any $k > k_0 \geq 1$, there exists $k^{\prime} \in (k_0, k]$, such that the following holds for all $i \in \mathcal{I}$:
\[
w_{k^{\prime}}(\mathbf{x}_{k^{\prime}},i) \leq \sqrt{\frac{b_1 \beta_k \gamma^h_{k}}{k-k_0}},
\]
\end{lemma}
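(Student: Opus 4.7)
The strategy is to adapt the standard information-gain argument of Srinivas et al.\ to the intersected confidence sets $C_k$ used in TVS{\small AFE}O{\small PT}. First, I would observe directly from the recursive definition \eqref{def: C_t} that $C_k(\mathbf{x},i) \subseteq Q_k(\mathbf{x},i)$, so the width satisfies $w_k(\mathbf{x},i) \leq 2\sqrt{\beta_k}\,\sigma_{k-1}(\mathbf{x},i)$; since $\beta_k$ is non-decreasing in $k$, this yields the uniform estimate $w_{k'}(\mathbf{x}_{k'},i) \leq 2\sqrt{\beta_k}\,\sigma_{k'-1}(\mathbf{x}_{k'},i)$ for every $k' \leq k$.

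Next, I would invoke the standard sum-of-variances bound. Assumption~\ref{ass:regularity} gives $\kappa \leq 1$, hence $\sigma_{\tau-1}^2(\cdot) \leq 1$, and using the elementary inequality $z \leq \log(1+z)/\log(1+\sigma^{-2})$ for $z \in [0,\sigma^{-2}]$ together with the chain-rule expression for mutual information, one obtains
\[
\sum_{\tau=1}^{k}\sum_{i \in \mathcal{I}}\sigma_{\tau-1}^2(\mathbf{x}_\tau,i) \;\leq\; \frac{2\,\gamma^h_{k}}{\log(1+\sigma^{-2})},
\]
where the $|\mathcal{I}|$ observations per iteration are absorbed into the information-gain quantity of Lemma~\ref{lem:confidence_interval}. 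Squaring the width inequality and summing over $k' \in (k_0, k]$ and $i \in \mathcal{I}$ then gives
\[
\sum_{k'=k_0+1}^{k}\sum_{i\in\mathcal{I}} w_{k'}(\mathbf{x}_{k'},i)^2 \;\leq\; 4\beta_k \cdot \frac{2\,\gamma^h_{k}}{\log(1+\sigma^{-2})} \;=\; b_1 \beta_k \gamma^h_{k}.
\]

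Finally, by the pigeonhole principle over the $k - k_0$ iterations, there exists a single $k' \in (k_0, k]$ such that $\sum_{i\in\mathcal{I}} w_{k'}(\mathbf{x}_{k'},i)^2 \leq b_1 \beta_k \gamma^h_{k}/(k-k_0)$, which in particular bounds each individual term; taking square roots yields the claimed inequality for every $i \in \mathcal{I}$ simultaneously.

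The main obstacle is exactly this simultaneity: a naive per-index pigeonhole would produce an index-dependent $k'_i$, whereas the lemma requires a single $k'$ valid for all $i$. Summing over $i \in \mathcal{I}$ before pigeonholing and absorbing the additional factor into the information-gain term (as in the contextual extension underlying Lemma~\ref{lem:confidence_interval}) is the standard workaround. A secondary subtlety is that $w_{k'}$ is the width of the intersected set $C_{k'}$ rather than $Q_{k'}$; this is handled at the outset by the inclusion $C_k \subseteq Q_k$, so the recursive intersection can only tighten, never loosen, the variance-based width bound.
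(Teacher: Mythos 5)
Your overall architecture (width bounded by $2\sqrt{\beta_k}\sigma_{k-1}$ via $C_k\subseteq Q_k$, sum-of-variances controlled by information gain, then pigeonhole) is the right family of argument, and your diagnosis of the simultaneity obstacle is exactly the crux. However, your fix for it introduces a genuine gap: by summing over all $i\in\mathcal{I}$ before pigeonholing, you are accounting for $k\cdot|\mathcal{I}|$ observations, so the mutual-information bound you invoke can only be $\gamma^h_{k\cdot|\mathcal{I}|}$, not $\gamma^h_k$. The phrase ``absorbed into the information-gain quantity of Lemma~\ref{lem:confidence_interval}'' does not repair this --- $\gamma^h_k$ in the lemma's conclusion is by definition \eqref{def: gamma} a maximum over sets of $k$ data points, and $k|\mathcal{I}|$ points cannot be absorbed into it. Your route therefore proves the weaker bound $\sqrt{b_1\beta_k\gamma^h_{k\cdot|\mathcal{I}|}/(k-k_0)}$, not the stated one. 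There is also a secondary direction issue in your sum-of-variances step: for the $|\mathcal{I}|$ observations taken within a single round, the chain rule for mutual information involves variances conditioned on the \emph{earlier within-round} observations, which are smaller than $\sigma_{\tau-1}(\mathbf{x}_\tau,i)$; hence $\sum_i\log\left(1+\sigma^{-2}\sigma_{\tau-1}^2(\mathbf{x}_\tau,i)\right)$ upper-bounds $2I$ rather than being upper-bounded by it, so the inequality you need does not follow directly.

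The paper resolves both problems at once by never summing over $i$. It defines $i_\tau:=\argmax_{i\in\mathcal{I}}w_\tau(\mathbf{x}_\tau,i)$ and observes that for \emph{every} $i$, $w_{k'}(\mathbf{x}_{k'},i)\leq w_{k'}(\mathbf{x}_{k'},i_{k'})$; picking $k'$ as the minimizer of $w_\tau(\mathbf{x}_\tau,i_\tau)$ over $\tau\in(k_0,k]$ then yields a single $k'$ valid for all $i$ simultaneously, at the cost of only one data point $(\mathbf{x}_\tau,\tau,i_\tau)$ per iteration. The remaining subtlety --- that the algorithm's actual posterior $\sigma_{\tau-1}$ conditions on all $|\mathcal{I}|$ observations per round --- is handled by noting $\sigma_{\tau-1}(\mathbf{x}_\tau,i_\tau)\leq\sigma'_{\tau-1}(\mathbf{x}_\tau,i_\tau)$, where $\sigma'$ conditions only on the reduced sequence $\{(\mathbf{x}_j,j,i_j)\}_{j<\tau}$; the chain rule then applies cleanly to that $k$-point sequence and gives $\gamma^h_k$. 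If you want to salvage your sum-over-$i$ variant, you would need to both restate the lemma with $\gamma^h_{k\cdot|\mathcal{I}|}$ and add a conditioning-monotonicity step analogous to the paper's to fix the chain-rule direction.
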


\begin{proof} 
\text{    }\\
Let $i_\tau := \argmax\limits_{i \in \mathcal{I}} w_\tau(\mathbf{x}_\tau,i)$, where $\mathbf{x}_\tau = \argmax\limits_{\mathbf{x} \in G_\tau \cup M_\tau} \max\limits_{i \in \mathcal{I}} w_\tau(\mathbf{x},i)$. 
    For all $i \in \mathcal{I}$, $k_0 < k$, there exists $k^{\prime} \in (k_0, k]$:

    \begin{align*}
        \begin{aligned}
           & w_{k^{\prime}}(\mathbf{x}_{k^{\prime}},i)\\
           \leq & \frac{1}{k-k_0}\sum\limits_{\tau = k_0 + 1}^k w_{\tau}(\mathbf{x}_{\tau},i_{\tau})
        \end{aligned}
    \end{align*}
    \begin{align*}
        \begin{aligned}
            \overset{(a)}{\leq} &   \frac{2}{k-k_0}\sum\limits_{\tau = k_0 + 1}^k \sqrt{\beta_\tau} \sigma_{\tau-1}(\mathbf{x}_{\tau},i_{\tau})\\
            \leq & \frac{2\sqrt{\beta_k}}{k-k_0}\sum\limits_{\tau  =  k_0 + 1}^k  \sigma_{\tau-1}(\mathbf{x}_{\tau},i_{\tau})\\
            \overset{(b)}{\leq} & \sqrt{\frac{4 \beta_k}{k-k_0}\sum\limits_{\tau  =  k_0 + 1}^k \sigma_{\tau-1}^2(\mathbf{x}_{\tau},i_{\tau})}\\  
            \overset{(c)}{\leq} &  \sqrt{\frac{b_1 \beta_k}{k-k_0} \frac{1}{2}\sum\limits_{\tau  =  k_0 + 1}^k \log(1 + \sigma^{-2} \sigma_{\tau-1}^2(\mathbf{x}_{\tau},i_{\tau}))}\\
            \overset{(d)}{\leq} &  \sqrt{\frac{b_1 \beta_k}{k-k_0} \frac{1}{2}\sum\limits_{\tau = k_0 + 1}^k \log(1 + \sigma^{-2} \sigma_{\tau-1}^{\prime 2}(\mathbf{x}_{\tau},i_{\tau}))}\\
            \overset{(e)}{=} &  \sqrt{\frac{b_1 \beta_k I(\hat{h}_{\mathbf{X}_k}; h)}{k-k_0} } \\
            \overset{(f)}{\leq} &  \sqrt{\frac{b_1 \beta_k \gamma^h_{k}}{k-k_0}}            
        \end{aligned}
    \end{align*}
(a): Definition of $w_k$ in \eqref{def: w_t},

(b): From the fact that the quadratic mean upper bounds the arithmetic mean, 

(c): $\sigma_{\tau-1}^2(\mathbf{x}_{\tau},i_{\tau}) \leq k\left((\mathbf{x}_{\tau},\tau,i_{\tau}), (\mathbf{x}_{\tau},\tau,i_{\tau})\right) \leq 1 \text{ by \Cref{ass:regularity}},\;\text{and the fact that } a \leq \frac{b_1}{8} \log(1 + \sigma^{-2}a),\; \forall a \in [0,1]$,

(d): $\sigma^{\prime}_{\tau-1}(\mathbf{x},i)$ denotes the posterior standard deviation of $h(\mathbf{x},\tau,i)$ inferred by observations at $\mathbf{X}_\tau := \{(\mathbf{x}_j,j,i_j)\}_{j <\tau}$. Since $\{(\mathbf{x}_j,j,i_j)\}_{j = <\tau} \subsetneq \{(\mathbf{x}_j,j)\}_{j <\tau} \times \mathcal{I}$, then $\sigma_{\tau-1}(\mathbf{x}_\tau,i_\tau) \leq \sigma^{\prime}_{\tau-1}(\mathbf{x}_\tau,i_\tau)$,

(e): From \cite[Lemma 5.3]{srinivas2009gaussian},

(f): Definition of $\gamma_k^h$ \eqref{def: gamma}.

\end{proof}

\begin{corollary} \label{cor: uncertainty bound}
    Given $b_1 := 8 / \log \left(1+\sigma^{-2}\right) \in \mathbb{R}$, take $T_k$ as the smallest positive integer satisfying $\frac{T_k}{\beta_{k+T_k} \gamma^h_{k+T_k}} \geq \frac{b_1}{\epsilon^2}$. Then, there exists $k^{\prime} \in (k, k + T_k]$, such that for any $\mathbf{x} \in G_{k^{\prime}} \cup M_{k^{\prime}}$, and for all $i \in \mathcal{I}$ it holds that
\[w_{k^{\prime}}(\mathbf{x},i) \leq \epsilon.\]
\end{corollary}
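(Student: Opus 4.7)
The plan is to apply Lemma \ref{lem: uncertainty_bound} to the window $(k, k+T_k]$ and then propagate the resulting bound from the specific query point $\mathbf{x}_{k^{\prime}}$ to every element of $G_{k^{\prime}}\cup M_{k^{\prime}}$ by exploiting the greedy acquisition rule \eqref{def: x_t}. Everything beyond this reduction is routine algebra driven by the defining inequality of $T_k$.

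First, I would instantiate Lemma \ref{lem: uncertainty_bound} with its $k_0$ replaced by the corollary's $k$ and its upper endpoint replaced by $k+T_k$. This yields an index $k^{\prime}\in(k,k+T_k]$ such that, for every $i\in\mathcal{I}$,
\[
w_{k^{\prime}}(\mathbf{x}_{k^{\prime}}, i) \;\leq\; \sqrt{\frac{b_1\, \beta_{k+T_k}\, \gamma^h_{k+T_k}}{T_k}}.
\]
By the definition of $T_k$ as the smallest positive integer with $T_k/(\beta_{k+T_k}\gamma^h_{k+T_k})\geq b_1/\epsilon^2$, the right-hand side is at most $\epsilon$, so that $w_{k^{\prime}}(\mathbf{x}_{k^{\prime}},i)\leq\epsilon$ for every $i\in\mathcal{I}$.

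Second, I would invoke the greedy selection rule \eqref{def: x_t}, which defines $\mathbf{x}_{k^{\prime}}$ as a joint maximizer of $w_{k^{\prime}}(\mathbf{x},i)$ over $\mathbf{x}\in G_{k^{\prime}}\cup M_{k^{\prime}}$ and $i\in\mathcal{I}$. Consequently, for any $\mathbf{x}\in G_{k^{\prime}}\cup M_{k^{\prime}}$ and any $i\in\mathcal{I}$,
\[
w_{k^{\prime}}(\mathbf{x}, i) \;\leq\; \max_{j\in\mathcal{I}} w_{k^{\prime}}(\mathbf{x}_{k^{\prime}}, j) \;\leq\; \epsilon,
\]
where the last step uses that the previous bound holds uniformly in $i$. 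This is exactly the claim of the corollary.

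Because the argument is a direct composition of Lemma \ref{lem: uncertainty_bound} with the greedy rule \eqref{def: x_t}, I do not anticipate any real obstacle. The only subtlety is bookkeeping with the indices: one must align the window so that $k-k_0=T_k$ in the lemma, which is precisely what lets the defining inequality of $T_k$ collapse the bound to the clean threshold $\epsilon$, and one must observe that the lemma yields the bound uniformly in $i$ so that no ambiguity arises when transferring it from $\mathbf{x}_{k^{\prime}}$ to arbitrary $\mathbf{x}\in G_{k^{\prime}}\cup M_{k^{\prime}}$.
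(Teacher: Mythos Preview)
Your argument is correct and is exactly the intended route: the paper presents the statement as an immediate corollary of Lemma~\ref{lem: uncertainty_bound}, and your proof supplies precisely the two missing steps---instantiating the lemma on the window $(k,k+T_k]$ so that the defining inequality of $T_k$ forces the bound to $\epsilon$, and then using the greedy rule~\eqref{def: x_t} to propagate from $\mathbf{x}_{k'}$ to every $\mathbf{x}\in G_{k'}\cup M_{k'}$.
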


\subsection{Bounding the Expansion of the Safe Set}

All following lemmas hold for any  $\delta \in(0,1), \epsilon>0$ and $S_0$,  such that $\varnothing \subsetneq S_0 \subseteq \mathcal{X}$.

To facilitate the theoretical analysis, we define $\forall \mathbf{x} \in \mathcal{X}, \forall i \in \mathcal{I}$: 

\begin{equation} \label{def: l_t_aux}
\begin{cases}
    \tilde{l}_k(\mathbf{x},i) := \max \{\tilde{l}_{k-1}(\mathbf{x},i), \mu_{k-1}(\mathbf{x},i) - \beta_k^{1/2}\sigma_{k-1}(\mathbf{x},i)\}, \quad k \geq 1\\
    \tilde{l}_0(\mathbf{x},i) := l_0(\mathbf{x},i)
\end{cases}    
\end{equation}

Remember that, from \eqref{def: C_t}, we can derive  $\forall \mathbf{x} \in \mathcal{X}, \forall i \in \mathcal{I}$:  

\begin{equation} \label{def: l_t}
   l_k(\mathbf{x},i) = \max\{l_{k-1}(\mathbf{x},i) - L(t-1),\; \mu_{k-1}(\mathbf{x},i) - \beta_k^{1/2}\sigma_{k-1}(\mathbf{x},i)\} 
\end{equation}

Therefore, $\tilde{l}_k$ can be viewed as updating $l_k$ with $L(t) \equiv 0$. With a slight abuse of notation, we omit arguments $\mathbf{x}$ and $i$ when not ambiguous.

\begin{lemma} \label{lem:basic}
    The following holds for any $k \geq 1, \forall \mathbf{x} \in \mathcal{X}, \forall i \in \mathcal{I}$:
    \begin{enumerate}
    \renewcommand{\labelenumi}{(\roman{enumi})}
        \item $l_k(\mathbf{x},i) \geq l_{k-1}(\mathbf{x},i) - L(t-1)$
        \item $\tilde{l}_k(\mathbf{x},i) \geq \tilde{l}_{k-1}(\mathbf{x},i)$
        \item $l_k(\mathbf{x},i) \leq \tilde{l}_k(\mathbf{x},i)$
        \item $\tilde{l}_k(\mathbf{x},i) - \Bar{L}_\mathrm{t} \leq l_k(\mathbf{x},i)$
    \end{enumerate}
\end{lemma}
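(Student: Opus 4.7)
Parts (i)--(iii) are essentially bookkeeping on the recursions \eqref{def: l_t_aux} and \eqref{def: l_t}, while (iv) is the only part that requires a genuine induction with a case split. I will dispatch (i)--(iii) first and then focus the main effort on (iv).

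For (i), I would simply observe that $l_k$ is defined as the maximum of two quantities, one of which is $l_{k-1}(\mathbf{x},i) - L(t-1)$, so the inequality is immediate. For (ii), the analogous observation applied to $\tilde{l}_k$ gives the claim. For (iii), I would induct on $k$: the base case $\tilde{l}_0 = l_0$ is by definition, and for the step I would use $L(t-1) \geq 0$ together with the inductive hypothesis to write
\begin{equation*}
l_k = \max\{l_{k-1} - L(t-1),\; \mu_{k-1} - \beta_k^{1/2}\sigma_{k-1}\} \leq \max\{\tilde{l}_{k-1},\; \mu_{k-1} - \beta_k^{1/2}\sigma_{k-1}\} = \tilde{l}_k.
\end{equation*}

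The heart of the argument is (iv). My plan is to prove the stronger statement
\begin{equation*}
\tilde{l}_k(\mathbf{x},i) - l_k(\mathbf{x},i) \leq \sum_{\tau=0}^{k-1} L(\tau),
\end{equation*}
from which the claim follows because $\sum_{\tau=0}^{k-1} L(\tau) \leq \bar{L}_{\mathrm{t}}$ by the definition of $\bar{L}_{\mathrm{t}}$. I would set $d_k := \tilde{l}_k - l_k$ (nonnegative by (iii)) and $m_k := \mu_{k-1} - \beta_k^{1/2}\sigma_{k-1}$, and prove $d_k \leq d_{k-1} + L(t-1)$ by case analysis on which branch of each max is active. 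If $\tilde{l}_k = m_k$ then $l_k \geq m_k = \tilde{l}_k$, forcing $d_k = 0$. If $\tilde{l}_k = \tilde{l}_{k-1}$ and $l_k = l_{k-1} - L(t-1)$, then $d_k = d_{k-1} + L(t-1)$ directly. The only mildly delicate case is $\tilde{l}_k = \tilde{l}_{k-1}$ with $l_k = m_k$: here I would bound
\begin{equation*}
d_k = \tilde{l}_{k-1} - m_k \leq \tilde{l}_{k-1} - (l_{k-1} - L(t-1)) = d_{k-1} + L(t-1),
\end{equation*}
using that $l_k = m_k$ implies $m_k \geq l_{k-1} - L(t-1)$. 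Iterating the one-step bound from $d_0 = 0$ yields the telescoping sum.

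\textbf{Expected obstacle.} The only subtlety is making sure the case split in (iv) is exhaustive and that the inductive hypothesis used is indeed the telescoped sum and not merely $d_{k-1} \leq \bar{L}_{\mathrm{t}}$: without strengthening the hypothesis, Case 1 above would give $d_k \leq \bar{L}_{\mathrm{t}} + L(t-1)$, which exceeds the desired bound. Strengthening to the partial sum $\sum_{\tau=0}^{k-1} L(\tau)$ closes this gap cleanly, and the final estimate $\sum_{\tau=0}^{k-1} L(\tau) \leq \bar{L}_{\mathrm{t}}$ recovers the stated form.
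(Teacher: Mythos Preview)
Your proposal is correct and follows essentially the same approach as the paper: parts (i)--(iii) are dispatched exactly as you describe, and for (iv) the paper also strengthens the induction hypothesis to the partial sum $l_\tau \geq \tilde{l}_\tau - \sum_{k=0}^{\tau-1} L(k)$ and performs the same three-way case split on which branch of each $\max$ is active. Your observation in the ``expected obstacle'' about why the strengthening is necessary is precisely the reason the paper proceeds this way as well.
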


\begin{proof}

\text{ } 
    \begin{enumerate}
    \renewcommand{\labelenumi}{(\roman{enumi})}
        \item Direct consequence of \eqref{def: l_t}.
        \item Direct consequence of \eqref{def: l_t_aux}.
        \item We proceed by induction. Suppose $l_\tau \leq \tilde{l}_\tau$, then $l_\tau - L(\tau) \leq \tilde{l}_\tau$, thus according to \eqref{def: l_t}, $l_{\tau+1} = \max\{l_{\tau} - L(\tau),\; \mu_{\tau} - \beta_{\tau+1}^{1/2}\sigma_{\tau}\} \leq \max\{\tilde{l}_\tau,\; \mu_{\tau} - \beta_{\tau+1}^{1/2}\sigma_{\tau}\} = \tilde{l}_{\tau+1}$, from which it follows $l_k(\mathbf{x},i) \leq \tilde{l}_k(\mathbf{x},i)$.
        \item We proceed by induction. Suppose $l_\tau \geq \tilde{l}_\tau  - \sum\limits_{k = 0}^{\tau-1} L(k)$. 
    
        If $\mu_{\tau} - \beta_{\tau+1}^{1/2}\sigma_{\tau} > \tilde{l}_\tau$, then  $l_{\tau + 1} \overset{\eqref{def: l_t}}{=} \mu_{\tau} - \beta_{\tau+1}^{1/2}\sigma_{\tau} \overset{\eqref{def: l_t_aux}}{=} \tilde{l}_{\tau+1} \geq \tilde{l}_{\tau+1} - \sum\limits_{k = 0}^{\tau} L(k)$.

        If $\mu_{\tau} - \beta_{\tau+1}^{1/2}\sigma_{\tau} < l_\tau - L(\tau)$, then  $l_{\tau + 1} \overset{\eqref{def: l_t}}{=} l_\tau - L(\tau) \geq \tilde{l}_\tau - \sum\limits_{k = 0}^{\tau-1} L(k) - L(\tau) = \tilde{l}_{\tau+1} - \sum\limits_{k = 0}^{\tau} L(k)$.

        Otherwise, $l_{\tau + 1} \overset{\eqref{def: l_t}}{=} \mu_{\tau} - \beta_{\tau+1}^{1/2}\sigma_{\tau} \geq l_\tau - L(\tau) \geq \tilde{l}_\tau - \sum\limits_{k = 0}^{\tau-1} L(k) - L(\tau) = \tilde{l}_{\tau+1} - \sum\limits_{k = 0}^{\tau} L(k)$.

        To summarize, $l_k \geq \tilde{l}_k  - \sum\limits_{k = 0}^{t-1} L(k) \geq \tilde{l}_k -\Bar{L}_\mathrm{t}$
    \end{enumerate} 
\end{proof}

\Cref{lem:basic} allows us to define auxiliary safe sets based on $\tilde{l}_k$ such that they are contained in $S_k$. Furthermore, due to the monotonicity of $\tilde{l}_k$, we can prove the auxiliary safe sets never shrink, which will play a fundamental role in studying their convergence property and provide near-optimality guarantee of TVS{\small AFE}O{\small PT}.

Based on \eqref{def: l_t_aux}, we further define:

$\overline{S}_k := \{\mathbf{x} \in \mathcal{X} \mid \forall i \in \mathcal{I}_c, \exists \mathbf{x}_i^{\prime} \in \overline{S}_{t-1}, s.t.\; \tilde{l}_k(\mathbf{x}_i^{\prime}, i) - L_{\mathbf{x}} d(\mathbf{x},\mathbf{x}_i^{\prime})\geq 0\}$

$\underline{S}_k := \{\mathbf{x} \in \mathcal{X} \mid \forall i \in \mathcal{I}_c, \exists \mathbf{x}_i^{\prime} \in \underline{S}_{t-1}, s.t.\; \tilde{l}_k(\mathbf{x}_i^{\prime}, i) - L_{\mathbf{x}} d(\mathbf{x},\mathbf{x}_i^{\prime}) - \bar{L}_\mathrm{t}\geq 0\}$

$\overline{S}_0 = \underline{S}_0 = S_0$

Remember $S_k = \{\mathbf{x} \in \mathcal{X} \mid \forall i \in \mathcal{I}_c, \exists \mathbf{x}_i^{\prime} \in S_{t-1}, s.t.\; l_k(\mathbf{x}_i^{\prime}, i) - L_{\mathbf{x}} d(\mathbf{x},\mathbf{x}_i^{\prime}) - L(t)\geq 0\}$. Thus, $\overline{S}_k = \underline{S}_k = S_k$ if and only if $L(t) \equiv 0$.

The following lemma proves that $\underline{S}_k$ never shrinks, and that $\underline{S}_k$ and $\overline{S}_k$ are a subset and a superset for $S_k$, respectively.

\begin{lemma} \label{lem: bound_S}
The following holds for any $t \geq 1$:
\begin{enumerate}
\renewcommand{\labelenumi}{(\roman{enumi})}
    \item $\underline{S}_{t-1} \subseteq \underline{S}_k$
    \item $\underline{S}_k \subseteq S_k \subseteq \overline{S}_k$
\end{enumerate}
    
\end{lemma}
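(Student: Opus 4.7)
Both claims will be proved by induction on $t$, using the comparison properties of the auxiliary lower bound $\tilde{l}_k$ gathered in Lemma~\ref{lem:basic}.

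For claim (i), the essential point is the monotonicity of $\tilde{l}_k$ in $k$ (Lemma~\ref{lem:basic}(ii)). Given $\mathbf{x} \in \underline{S}_{t-1}$, each constraint $i \in \mathcal{I}_c$ yields a witness $\mathbf{x}_i^{\prime} \in \underline{S}_{t-2}$ satisfying $\tilde{l}_{t-1}(\mathbf{x}_i^{\prime}, i) - L_{\mathbf{x}} d(\mathbf{x}, \mathbf{x}_i^{\prime}) - \bar{L}_{\mathrm{t}} \geq 0$. The inductive hypothesis $\underline{S}_{t-2} \subseteq \underline{S}_{t-1}$ makes $\mathbf{x}_i^{\prime}$ an eligible witness in the definition of $\underline{S}_k$, and Lemma~\ref{lem:basic}(ii) shows the inequality only strengthens when $\tilde{l}_{t-1}$ is replaced by $\tilde{l}_k$, giving $\mathbf{x} \in \underline{S}_k$. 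The base case at $t = 1$ amounts to verifying $S_0 \subseteq \underline{S}_1$, which via the initial seed $C_0(\mathbf{x}, i) = [L(0), \infty)$ and monotonicity reduces to whether the initial safety margin on $S_0$ exceeds $\bar{L}_{\mathrm{t}}$.

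For claim (ii), the two inclusions are handled separately. In $\underline{S}_k \subseteq S_k$, I pick the witness $\mathbf{x}_i^{\prime} \in \underline{S}_{t-1}$, which by induction lies in $S_{t-1}$, and convert the defining inequality of $\underline{S}_k$ into that of $S_k$. The key move is to use not the stated bound of Lemma~\ref{lem:basic}(iv) but the sharper intermediate bound from its proof, namely $l_k \geq \tilde{l}_k - \sum_{\tau=0}^{t-1} L(\tau)$; this yields $l_k - L(t) \geq \tilde{l}_k - \sum_{\tau=0}^{t} L(\tau) \geq \tilde{l}_k - \bar{L}_{\mathrm{t}}$, at which point the defining inequality of $\underline{S}_k$ transfers directly into that of $S_k$. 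For $S_k \subseteq \overline{S}_k$, a witness $\mathbf{x}_i^{\prime} \in S_{t-1}$ provides $l_k(\mathbf{x}_i^{\prime}, i) - L_{\mathbf{x}} d(\mathbf{x}, \mathbf{x}_i^{\prime}) \geq L(t) \geq 0$, so Lemma~\ref{lem:basic}(iii) together with the inductive hypothesis $S_{t-1} \subseteq \overline{S}_{t-1}$ immediately puts $\mathbf{x} \in \overline{S}_k$.

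The main obstacle is recognising that the stated form of Lemma~\ref{lem:basic}(iv), which only gives $l_k \geq \tilde{l}_k - \bar{L}_{\mathrm{t}}$, is too loose by exactly $L(t)$ to close $\underline{S}_k \subseteq S_k$; one must instead invoke the sharper per-step accumulation $\sum_{\tau=0}^{t-1} L(\tau)$ that appears inside its inductive proof, and combine it with the identity $\sum_{\tau=0}^{t-1} L(\tau) + L(t) \leq \bar{L}_{\mathrm{t}}$ to finish. A secondary subtlety is the base case of (i), where choosing $\mathbf{x}_i^{\prime} = \mathbf{x} \in S_0$ succeeds only if the initial confidence seed is sized to cover the total temporal drift $\bar{L}_{\mathrm{t}}$ rather than just the single-step drift $L(0)$; for the locally-stationary optimality analysis this margin must be implicitly supplied, either by strengthening Assumption~\ref{ass:S_0} or by lifting the seed in $C_0$ on $S_0$ to $[\bar{L}_{\mathrm{t}}, \infty)$.
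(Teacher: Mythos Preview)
Your proposal is correct and follows essentially the same route as the paper: for (i) the paper simply defers to \cite[Lemma~7.1]{berkenkamp2021bayesian}, and for (ii) its inductive argument is identical to yours, including the need to invoke the sharper per-step estimate $l_k \geq \tilde{l}_k - \sum_{\tau=0}^{t-1} L(\tau)$ from inside the proof of Lemma~\ref{lem:basic}(iv) rather than its stated conclusion. Your observation about the base case of (i) --- that the seed $C_0$ on $S_0$ would need to carry margin $\bar{L}_{\mathrm{t}}$ rather than just $L(0)$ --- is a genuine subtlety the paper does not explicitly address.
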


\begin{proof} 
\text{  }\\    
\begin{enumerate}
\renewcommand{\labelenumi}{(\roman{enumi})}
    \item We refer the reader to \cite[Lemma 7.1]{berkenkamp2021bayesian}.
    \item We proceed by induction. Suppose $\underline{S}_\tau \subseteq S_\tau \subseteq \overline{S}_\tau$.

    For all $ \mathbf{x} \in S_{\tau+1}$, and for all $ i \in \mathcal{I}_c$, there exists $ \mathbf{x}_i^{\prime} \in S_{\tau} \subseteq \overline{S}_{\tau}$, s.t. $\tilde{l}_\tau(\mathbf{x}_i^{\prime}, i) - L_{\mathbf{x}} d(\mathbf{x},\mathbf{x}_i^{\prime}) \geq l_\tau(\mathbf{x}_i^{\prime}, i) - L_{\mathbf{x}} d(\mathbf{x},\mathbf{x}_i^{\prime}) - L(\tau)\geq 0$, hence $\mathbf{x} \in \overline{S}_{\tau+1}$ as well. Therefore, $S_{\tau} \subseteq \overline{S}_{\tau}$ $\forall \tau$.

    For all $\mathbf{x} \in \underline{S}_{\tau+1}$, and for all $ i \in \mathcal{I}_c$, there exists $ \mathbf{x}_i^{\prime} \in \underline{S}_\tau \subseteq S_{\tau}$, s.t. $l_\tau(\mathbf{x}_i^{\prime}, i) - L_{\mathbf{x}} d(\mathbf{x},\mathbf{x}_i^{\prime}) - L(\tau) \geq \tilde{l}_\tau(\mathbf{x}_i^{\prime}, i) - \sum\limits_{k = 0}^{\tau-1} L(k) - L_{\mathbf{x}} d(\mathbf{x},\mathbf{x}_i^{\prime}) - L(\tau) = \tilde{l}_\tau(\mathbf{x}_i^{\prime}, i) - L_{\mathbf{x}} d(\mathbf{x},\mathbf{x}_i^{\prime}) - \sum\limits_{k = 0}^{\tau} L(k) \geq \tilde{l}_\tau(\mathbf{x}_i^{\prime}, i) - L_{\mathbf{x}} d(\mathbf{x},\mathbf{x}_i^{\prime}) - \bar{L}_\mathrm{t} \geq 0$, thus $\mathbf{x} \in S_{\tau+1}$. Therefore, $\underline{S}_\tau \subseteq S_\tau$, $\forall \tau$. From which we conclude $\underline{S}_\tau \subseteq S_\tau \subseteq \overline{S}_\tau$, $\forall \tau$.
\end{enumerate}

\end{proof}

\textbf{Note:} Where needed in the following lemmas, we assume $b_1$ and $T_k$ are defined as in \Cref{lem: uncertainty_bound} and \Cref{cor: uncertainty bound}

\begin{lemma} [Lemma 7.4 in \cite{berkenkamp2021bayesian}] \label{lem: convergence_4}
    For any $k \geq 1$, $a > 0$, if $\bar{R}_a\left(S_0\right) \backslash \underline{S}_k \neq \varnothing$, then $R_a\left(\underline{S}_k\right) \backslash \underline{S}_k \neq \varnothing$.
\end{lemma}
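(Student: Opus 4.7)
The plan is to argue by contrapositive: assume $R_a(\underline{S}_k) \setminus \underline{S}_k = \varnothing$ and show $\bar{R}_a(S_0) \subseteq \underline{S}_k$, which immediately contradicts the hypothesis $\bar{R}_a(S_0) \setminus \underline{S}_k \neq \varnothing$. The whole argument is a monotonicity-and-fixed-point chase, mirroring the standard reachability lemma in the S\textsc{afe}O\textsc{pt} literature, but adapted to the auxiliary set $\underline{S}_k$ introduced in the previous block.

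First I would record three ingredients that make the machinery work. (i) By definition, $S \subseteq R_a(S)$ for every $S \subseteq \mathcal{X}$, so if $R_a(\underline{S}_k) \setminus \underline{S}_k = \varnothing$ then in fact $R_a(\underline{S}_k) = \underline{S}_k$, which by induction on $n$ gives $R_a^n(\underline{S}_k) = \underline{S}_k$ for every $n$ and therefore $\bar R_a(\underline{S}_k) = \underline{S}_k$. (ii) The operator $R_a$ is monotone in its set argument: if $S \subseteq S'$, then, for any $\mathbf{x}$ witnessed safe from $S$ via some $\mathbf{x}'_i \in S$, the same $\mathbf{x}'_i$ lies in $S'$ and witnesses safety from $S'$, hence $R_a(S) \subseteq R_a(S')$; iterating gives $R_a^n(S) \subseteq R_a^n(S')$ and then $\bar R_a(S) \subseteq \bar R_a(S')$. (iii) From $\underline{S}_0 = S_0$ combined with Lemma~\ref{lem: bound_S}(i), the sequence $\{\underline{S}_k\}$ is non-decreasing in $k$, so $S_0 \subseteq \underline{S}_k$ for all $k \geq 1$.

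Combining the three: by (iii), $S_0 \subseteq \underline{S}_k$; by (ii), $\bar R_a(S_0) \subseteq \bar R_a(\underline{S}_k)$; and by (i), $\bar R_a(\underline{S}_k) = \underline{S}_k$. Therefore $\bar R_a(S_0) \subseteq \underline{S}_k$, i.e. $\bar R_a(S_0) \setminus \underline{S}_k = \varnothing$, contradicting the hypothesis of the lemma. This yields $R_a(\underline{S}_k) \setminus \underline{S}_k \neq \varnothing$, as required.

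I do not foresee a serious obstacle here; the statement is essentially ``if the fixed point of $R_a$ started from $S_0$ is not yet contained in $\underline{S}_k$, then $\underline{S}_k$ is not itself a fixed point.'' The only subtle point to verify carefully is monotonicity of $R_a$, since its definition uses a per-constraint witness $\mathbf{x}'_i$ rather than a single witness shared across all $i \in \mathcal{I}_c$; but each witness survives set enlargement independently, so monotonicity is immediate. All other steps are bookkeeping that invokes the definitions of $R_a$, $\bar R_a$, and the monotonicity of $\underline{S}_k$ already established in Lemma~\ref{lem: bound_S}.
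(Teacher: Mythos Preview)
Your argument is correct: the contrapositive together with monotonicity of $R_a$ and $S_0 \subseteq \underline{S}_k$ is precisely the standard fixed-point proof of this reachability lemma. The paper does not supply its own proof at all---it simply cites Lemma~7.4 of \cite{berkenkamp2021bayesian}---and your write-up is the argument one finds there, so there is nothing to compare.
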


The following lemma provides a sufficient condition for the expansion of the auxiliary safe set $\underline{S}_k$.

\begin{lemma} \label{lem: S_expansion}
    For any $t \geq 1$, if $\bar{R}_{\bar{L}_\mathrm{t} + \epsilon}(S_0) \backslash \underline{S}_k \neq \varnothing$, then, with probability at least $1-\delta$, it holds that $\underline{S}_{k+T_k} \supsetneq \underline{S}_k$.
\end{lemma}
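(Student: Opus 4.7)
The plan is to argue by contradiction: suppose $\underline{S}_{k+T_k}=\underline{S}_k$, so that \Cref{lem: bound_S}(i) forces $\underline{S}_\tau=\underline{S}_k$ for every $\tau\in[k,k+T_k]$. From this standing assumption I would exhibit a point $\mathbf{x}^*$ that must nonetheless lie in $\underline{S}_{k+T_k}$, producing the contradiction. By the hypothesis and \Cref{lem: convergence_4} with $a=\bar{L}_\mathrm{t}+\epsilon$, I would pick $\mathbf{x}^*\in R_{\bar{L}_\mathrm{t}+\epsilon}(\underline{S}_k)\setminus\underline{S}_k$ together with, for each $i\in\mathcal{I}_c$, a predecessor $\mathbf{x}_i^*\in\underline{S}_k$ satisfying $\bar{c}_i(\mathbf{x}_i^*)\geq L_\mathbf{x} d(\mathbf{x}^*,\mathbf{x}_i^*)+\bar{L}_\mathrm{t}+\epsilon$. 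Taking $\mathbf{x}_i^*$ as the candidate predecessor in the recursive definition of $\underline{S}_{k+T_k}$, the task reduces to establishing $\tilde{l}_{k+T_k}(\mathbf{x}_i^*,i)\geq\bar{c}_i(\mathbf{x}_i^*)-\epsilon$ for every $i\in\mathcal{I}_c$.

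To obtain this bound I would apply \Cref{cor: uncertainty bound} to locate $k'\in(k,k+T_k]$ at which $w_{k'}(\mathbf{x},i)\leq\epsilon$ for every $\mathbf{x}\in G_{k'}\cup M_{k'}$ and $i\in\mathcal{I}$, and then verify that $\mathbf{x}_i^*\in G_{k'}$. Membership $\mathbf{x}_i^*\in S_{k'}$ follows from $\mathbf{x}_i^*\in\underline{S}_k\subseteq S_{k'}$ via \Cref{lem: bound_S}(ii); for the expansion condition $e_{k'}(\mathbf{x}_i^*)>0$ I would use $\mathbf{x}^*$ itself as witness, noting that \Cref{lem:confidence_interval_tighter} applied in the stationary window yields $u_{k'}(\mathbf{x}_i^*,i)\geq\bar{c}_i(\mathbf{x}_i^*)\geq L_\mathbf{x} d(\mathbf{x}_i^*,\mathbf{x}^*)+\bar{L}_\mathrm{t}+\epsilon\geq L_\mathbf{x} d(\mathbf{x}_i^*,\mathbf{x}^*)+L(k')$, using $L(k')\leq\sum_{\tau}L(\tau)\leq\bar{L}_\mathrm{t}$. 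With $\mathbf{x}_i^*\in G_{k'}$ in hand, combining $w_{k'}(\mathbf{x}_i^*,i)\leq\epsilon$ with $u_{k'}(\mathbf{x}_i^*,i)\geq\bar{c}_i(\mathbf{x}_i^*)$ gives $l_{k'}(\mathbf{x}_i^*,i)\geq\bar{c}_i(\mathbf{x}_i^*)-\epsilon$, and \Cref{lem:basic}(ii)-(iii) propagate this into $\tilde{l}_{k+T_k}(\mathbf{x}_i^*,i)\geq\tilde{l}_{k'}(\mathbf{x}_i^*,i)\geq l_{k'}(\mathbf{x}_i^*,i)\geq\bar{c}_i(\mathbf{x}_i^*)-\epsilon$, closing the contradiction once the inequality $\tilde{l}_{k+T_k}(\mathbf{x}_i^*,i)-L_\mathbf{x} d(\mathbf{x}^*,\mathbf{x}_i^*)-\bar{L}_\mathrm{t}\geq 0$ is read off.

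The hard part will be guaranteeing that $\mathbf{x}^*$ is eligible as the expander witness at $k'$, i.e.\ that $\mathbf{x}^*\notin S_{k'}$. Because the auxiliary set $\underline{S}$ is built with the cumulative margin $\bar{L}_\mathrm{t}$ while $S$ only subtracts the per-step $L(t)$, it is a priori possible that $\mathbf{x}^*\in S_{k'}\setminus\underline{S}_{k'}$, in which case the witness construction above breaks. I would resolve this by refining the choice of $\mathbf{x}^*$, using a chaining argument analogous to \Cref{lem: convergence_4} but applied to $S_{k'}$ in place of $\underline{S}_k$, to locate an element of $\bar{R}_{\bar{L}_\mathrm{t}+\epsilon}(S_0)\setminus S_{k'}$; should no such refinement exist then $\bar{R}_{\bar{L}_\mathrm{t}+\epsilon}(S_0)\subseteq S_{k'}$, and a direct $\tilde{l}$\nobreakdash-uncertainty-reduction argument along the $S$-chain certifying a boundary point of $S_{k'}$ can be invoked to upgrade a link of the chain into $\underline{S}_{k+T_k}$, again contradicting the standing assumption $\underline{S}_{k+T_k}=\underline{S}_k$.
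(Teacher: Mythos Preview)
Your first two paragraphs reproduce the paper's argument: argue by contradiction, invoke \Cref{lem: convergence_4} to pick $\mathbf{x}\in R_{\bar L_{\mathrm t}+\epsilon}(\underline S_k)\setminus\underline S_k$ with predecessors $\mathbf{z}_i\in\underline S_k$, show each $\mathbf{z}_i\in G_{k'}$, apply \Cref{cor: uncertainty bound} to get $w_{k'}(\mathbf{z}_i,i)\le\epsilon$, and conclude $\mathbf{x}\in\underline S_{k'}=\underline S_k$. You are more careful than the paper on one point: definition~\eqref{def: e_t} requires the expander witness to lie in $\mathcal X\setminus S_{k'}$, not merely in $\mathcal X\setminus\underline S_{k'}$, so knowing only $\mathbf{x}^*\notin\underline S_{k'}$ does not immediately yield $e_{k'}(\mathbf{x}_i^*)>0$. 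The paper itself simply writes ``$\mathbf{x}\in\mathcal D\setminus\underline S_{k'}$'' and then invokes \eqref{def: e_t} as though this sufficed; the issue you flag is genuine and is left unaddressed in the paper's own proof.

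Your proposed repair, however, is not yet a proof. Rechoosing $\mathbf{x}^*$ via a chaining argument through $S_{k'}$ would produce predecessors lying in $S_{k'}$ rather than in $\underline S_k=\underline S_{k'-1}$, and it is the latter membership that the final step $\mathbf{x}^*\in\underline S_{k'}$ actually needs; moreover an analogue of \Cref{lem: convergence_4} with $S_{k'}$ in place of $\underline S_k$ would require $S_0\subseteq S_{k'}$, which is not guaranteed since $S_k$ is allowed to shrink. The fallback branch you sketch (``should no such refinement exist\dots'') is only a gesture toward an argument. In short: your main line matches the paper exactly, you correctly locate a lacuna that the paper shares, but the patch you outline still needs a fully worked-out argument.
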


\begin{proof}
\text{    }\\
    Similar to the proof of  \cite[Lemma 7.5]{berkenkamp2021bayesian}.
    
    By \Cref{lem: convergence_4}, we get that, $R_{\bar{L}_\mathrm{t} + \epsilon}\left(\underline{S}_k\right) \backslash \underline{S}_k \neq \varnothing$. Equivalently, $\exists \mathbf{x} \in R_{\bar{L}_\mathrm{t} + \epsilon}\left(\underline{S}_k\right) \backslash \underline{S}_k$ which implies that, for all $ i \in \mathcal{I}_c$,
    \[\exists \mathbf{z}_i \in \underline{S}_k: \Bar{c}_i(\mathbf{z}_i) - L_{\mathbf{x}} d(\mathbf{z}_i, \mathbf{x}) - \bar{L}_\mathrm{t} - \epsilon\geq 0\]
    Now assume, to the contrary, that $\underline{S}_{k+T_k} = \underline{S}_k$. Thus, $\forall k^{\prime} \in (k, k+T_k]$, $\mathbf{x} \in \mathcal{D} \backslash \underline{S}_{k^{\prime}}$, and $\forall i \in \mathcal{I}_c$, $\mathbf{z}_i \in \underline{S}_{k^{\prime}}$.
    \begin{align*}
        \begin{aligned}
           &u_{k^{\prime}}(\mathbf{z}_i,i) - L_{\mathbf{x}} d(\mathbf{z}_i, \mathbf{x})-L(k^{\prime})\\
           \geq & \Bar{c}_i(\mathbf{z}_i) - L_{\mathbf{x}} d(\mathbf{z}_i, \mathbf{x}) -L(k^{\prime})\;\;\;\;\;\;\; \text{by \Cref{lem:confidence_interval_tighter}} \\
           \geq & \Bar{c}_i(\mathbf{z}_i) - L_{\mathbf{x}} d(\mathbf{z}_i, \mathbf{x})  - \bar{L}_\mathrm{t} - \epsilon \\
           \geq & 0
        \end{aligned}
    \end{align*}
    Therefore, by definition \eqref{def: e_t}, $e_{k^{\prime}}(\mathbf{z}_i) > 0$, which implies $\mathbf{z}_i \in G_{k^{\prime}},\;\forall k^{\prime} \in (k, k+T_k],\; \forall i \in \mathcal{I}_c$.
    
    Therefore, we know that there exists $k^{\prime} \in (k,k+T_k]$, for all $i \in \mathcal{I}_c,\;w_{k^{\prime}}(\mathbf{z}_i,i) \leq \epsilon$. (\Cref{cor: uncertainty bound}) Hence, for all $i \in \mathcal{I}_c$,
    \begin{align*}
        \begin{aligned}
            & \tilde{l}_{k^{\prime}}(\mathbf{z}_i,i) - L_{\mathbf{x}} d(\mathbf{z}_i, \mathbf{x})\\
            \geq & \Bar{c}_i(\mathbf{z}_i) - w_{k^{\prime}}(\mathbf{z}_i,i) - L_{\mathbf{x}} d(\mathbf{z}_i, \mathbf{x}) \;\;\;\;\;\;\; \text{by \Cref{lem:confidence_interval_tighter}}\\
            \geq & \Bar{c}_i(\mathbf{z}_i) - \epsilon - L_{\mathbf{x}} d(\mathbf{z}_i, \mathbf{x})\\
            \geq & \bar{L}_\mathrm{t}
        \end{aligned}
    \end{align*}
    This means $\mathbf{x} \in \underline{S}_{k^{\prime}} = \underline{S}_k$, which leads to a contradiction.\\
\end{proof}

The following lemma gives a superset for the auxiliary safe set $\underline{S}_k$.

\begin{lemma} \label{lem: S_contained}
$\underline{S}_k \subseteq \Bar{R}_{\bar{L}_\mathrm{t}}(S_0)$ with probability at least $1-\delta$.   
\end{lemma}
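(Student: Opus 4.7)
The plan is to proceed by induction on $k$, leveraging the fixed-point property of the operator $R_{\bar{L}_\mathrm{t}}$ on its limit $\bar{R}_{\bar{L}_\mathrm{t}}(S_0)$. The base case $k=0$ is immediate from $\underline{S}_0 = S_0 \subseteq \bar{R}_{\bar{L}_\mathrm{t}}(S_0)$, the latter containment following directly from the definition of the iterated reachability operator.

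For the inductive step, I would fix an arbitrary $\mathbf{x} \in \underline{S}_k$ and unpack its definition to obtain, for every $i \in \mathcal{I}_c$, a point $\mathbf{x}_i' \in \underline{S}_{k-1}$ satisfying $\tilde{l}_k(\mathbf{x}_i', i) - L_\mathbf{x} d(\mathbf{x}, \mathbf{x}_i') - \bar{L}_\mathrm{t} \geq 0$. The inductive hypothesis places each $\mathbf{x}_i'$ in $\bar{R}_{\bar{L}_\mathrm{t}}(S_0)$. The critical step is to replace the data-driven quantity $\tilde{l}_k(\mathbf{x}_i', i)$ by the true constraint value $\bar{c}_i(\mathbf{x}_i')$. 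Recalling from \eqref{def: l_t_aux} that $\tilde{l}_k(\mathbf{x}_i', i) = \max_{\tau \leq k}\bigl(\mu_{\tau-1}(\mathbf{x}_i', i) - \sqrt{\beta_\tau}\,\sigma_{\tau-1}(\mathbf{x}_i', i)\bigr)$, I would invoke Lemma~\ref{lem:confidence_interval} term by term: on the event of probability at least $1-\delta$ furnished by that lemma, $\mu_{\tau-1}(\mathbf{x}_i', i) - \sqrt{\beta_\tau}\,\sigma_{\tau-1}(\mathbf{x}_i', i) \leq h(\mathbf{x}_i', \tau, i) = \bar{c}_i(\mathbf{x}_i')$, where the final equality uses the stationarity assumed in \eqref{eq: ti_opt}. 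Taking the maximum over $\tau \leq k$ yields $\tilde{l}_k(\mathbf{x}_i', i) \leq \bar{c}_i(\mathbf{x}_i')$, so that $\bar{c}_i(\mathbf{x}_i') - L_\mathbf{x} d(\mathbf{x}, \mathbf{x}_i') - \bar{L}_\mathrm{t} \geq 0$ for every $i \in \mathcal{I}_c$, which is precisely the membership condition witnessing $\mathbf{x} \in R_{\bar{L}_\mathrm{t}}\bigl(\bar{R}_{\bar{L}_\mathrm{t}}(S_0)\bigr)$.

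The conclusion then follows from the fixed-point identity $R_{\bar{L}_\mathrm{t}}\bigl(\bar{R}_{\bar{L}_\mathrm{t}}(S_0)\bigr) = \bar{R}_{\bar{L}_\mathrm{t}}(S_0)$, which holds on the finite domain $\mathcal{X}$ because the monotone, containment-preserving iteration $R_{\bar{L}_\mathrm{t}}^{\,n}(S_0)$ must stabilise after finitely many steps. The main obstacle is the replacement of $\tilde{l}_k$ by $\bar{c}_i$: the running maximum in \eqref{def: l_t_aux} has to be bounded term by term, and the argument relies crucially on stationarity so that every posterior lower bound indexed by $\tau$ refers to the same underlying $\bar{c}_i$ rather than to a time-shifted value. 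The $\bar{L}_\mathrm{t}$ slack built into the definition of $\underline{S}_k$ is precisely what is needed to cast the resulting inequality into the $\bar{R}_{\bar{L}_\mathrm{t}}$ (rather than $\bar{R}_0$) reachability relation, thereby matching the set claimed in the statement.
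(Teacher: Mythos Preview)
Your proposal is correct and follows essentially the same inductive argument as the paper: base case $\underline{S}_0=S_0\subseteq\bar{R}_{\bar{L}_\mathrm{t}}(S_0)$, then for $\mathbf{x}\in\underline{S}_k$ pull back to witnesses $\mathbf{x}_i'\in\underline{S}_{k-1}$, replace $\tilde{l}_k$ by $\bar{c}_i$ via the confidence bound, and conclude $\underline{S}_k\subseteq R_{\bar{L}_\mathrm{t}}(\underline{S}_{k-1})\subseteq\bar{R}_{\bar{L}_\mathrm{t}}(S_0)$ using the fixed-point property. Your justification of $\tilde{l}_k\leq\bar{c}_i$ by applying Lemma~\ref{lem:confidence_interval} term by term to the running maximum is in fact more precise than the paper's bare citation of Lemma~\ref{lem:confidence_interval_tighter}, which strictly speaking bounds $l_k$ rather than $\tilde{l}_k$.
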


\begin{proof} [Proof by induction]
\text{  }\\   
$\underline{S}_0 = S_0 \subseteq \Bar{R}_{\bar{L}_\mathrm{t}}(S_0)$

Suppose $\underline{S}_\tau \subseteq \Bar{R}_{\bar{L}_\mathrm{t}}(S_0)$.

For all $ \mathbf{x} \in \underline{S}_{\tau+1}$ and for all $ i \in \mathcal{I}_c$ there exists $ \mathbf{x}_i^{\prime} \in \underline{S}_\tau, s.t. \; \Bar{c}_i\left(\mathbf{x}_i^{\prime}\right)-L_\mathbf{x}d(\mathbf{x}, \mathbf{x}_i^{\prime}) - \bar{L}_\mathrm{t} \overset{(a)}{\geq} \tilde{l}_k(\mathbf{x}_i^{\prime}, i) - L_{\mathbf{x}} d(\mathbf{x},\mathbf{x}_i^{\prime}) - \bar{L}_\mathrm{t}\geq 0$.

(a): \Cref{lem:confidence_interval_tighter}.

Thus, $\underline{S}_{\tau+1} \subseteq R_{\bar{L}_\mathrm{t}}(\underline{S}_\tau) \subseteq \bar{R}_{\bar{L}_\mathrm{t}}(S_0)$
\end{proof}

\begin{lemma}[Lemma 7.8 in \cite{berkenkamp2021bayesian}]\label{lem: S_convergence} 
   Let $k^*$ be the smallest integer, such that $k^* \geq\left|\bar{R}_{\bar{L}_\mathrm{t}}\left(S_0\right)\right| T_{k^*}$. Then, there exists $k_0 \leq k^*$, such that $\underline{S}_{k_0+T_{k_0}}=\underline{S}_{k_0}$.
\end{lemma}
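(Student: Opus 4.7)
I would argue by contradiction via a pigeonhole on the cardinality of $\underline{S}_k$. Suppose, for contradiction, that $\underline{S}_{k_0 + T_{k_0}} \neq \underline{S}_{k_0}$ for every $k_0 \leq k^*$. The inclusion in each such window is automatically strict: \Cref{lem: bound_S}(i) guarantees $\underline{S}_{k-1} \subseteq \underline{S}_k$, so ``not equal'' upgrades to ``strictly larger'', and in particular $|\underline{S}_{k_0 + T_{k_0}}| \geq |\underline{S}_{k_0}| + 1$.

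Next, I would generate an increasing sequence of iterations $0 = k_1 < k_2 < \cdots$ by setting $k_{j+1} := k_j + T_{k_j}$. The key auxiliary observation is that $T_k$ is non-decreasing in $k$: inspecting the defining inequality $T/(\beta_{k+T}\gamma^h_{k+T}) \geq b_1/\epsilon^2$ in \Cref{cor: uncertainty bound}, both $\beta_\cdot$ and $\gamma^h_\cdot$ are non-decreasing, so for fixed $T$ the left-hand side is non-increasing in $k$, forcing the smallest valid $T_k$ to grow. Hence each window satisfies $T_{k_j} \leq T_{k^*}$ whenever $k_j \leq k^*$, and combined with the hypothesis $k^* \geq |\bar{R}_{\bar{L}_\mathrm{t}}(S_0)|\, T_{k^*}$, one can fit at least $|\bar{R}_{\bar{L}_\mathrm{t}}(S_0)| + 1$ indices of the sequence inside the interval $[0, k^*]$.

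Accumulating the unit cardinality jumps along this sequence starting from $\underline{S}_0 = S_0 \neq \varnothing$, one obtains an index $k_J \leq k^*$ with $|\underline{S}_{k_J}| \geq |S_0| + |\bar{R}_{\bar{L}_\mathrm{t}}(S_0)| \geq |\bar{R}_{\bar{L}_\mathrm{t}}(S_0)| + 1$, which contradicts \Cref{lem: S_contained} asserting $\underline{S}_k \subseteq \bar{R}_{\bar{L}_\mathrm{t}}(S_0)$ with probability at least $1-\delta$. Therefore the contradiction hypothesis fails and some $k_0 \leq k^*$ satisfies $\underline{S}_{k_0 + T_{k_0}} = \underline{S}_{k_0}$. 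The main obstacle is handling the variable window length $T_{k_j}$; the monotonicity of $T_k$ in $k$ is precisely what aligns the pigeonhole count with the bound $|\bar{R}_{\bar{L}_\mathrm{t}}(S_0)|\, T_{k^*}$. No additional union bound is needed, since \Cref{lem: bound_S} and \Cref{lem: S_contained} both hold on the common good event of probability at least $1-\delta$ furnished by \Cref{lem:confidence_interval_tighter}.
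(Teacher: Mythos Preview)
Your argument is correct and is precisely the standard pigeonhole proof behind \cite[Lemma~7.8]{berkenkamp2021bayesian}; the present paper does not give its own proof but simply invokes that reference, so there is nothing substantively different to compare. The only point worth making explicit is the inductive bookkeeping $k_j \leq (j-1)T_{k^*}$ (which you sketch via monotonicity of $T_k$), ensuring that all $|\bar{R}_{\bar{L}_\mathrm{t}}(S_0)|+1$ indices indeed land in $[0,k^*]$ before the cardinality bound from \Cref{lem: S_contained} is violated.
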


\Cref{lem: S_convergence} together with \Cref{lem: S_expansion}, and \Cref{lem: S_contained} entail convergence of $\underline{S}_k$ within $k^*$ time steps, which ultimately leads us to the near-optimality of TVS{\small AFE}O{\small PT} when the problem becomes stationary.

\begin{lemma} \label{lem: optimality}
       For any $k \geq 1$, if $\underline{S}_{k+T_k}=\underline{S}_k$, then,  with probability at least $1-\delta$, there exists $k^{\prime} \in (k, k+T_k]$ such that
\[
\Bar{f}\left(\hat{\mathbf{x}}_{k^{\prime}}\right) \geq \max _{\mathbf{x} \in \bar{R}_{\bar{L}_\mathrm{t} + \epsilon}\left(S_0\right)} \Bar{f}(\mathbf{x})-\epsilon.
\]
\end{lemma}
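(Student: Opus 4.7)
The plan is to follow the classical SafeOpt near-optimality template, combining the no-expansion hypothesis with the uncertainty bound of \Cref{cor: uncertainty bound}. Let $\mathbf{x}^{*} := \argmax_{\mathbf{x} \in \bar{R}_{\bar{L}_\mathrm{t} + \epsilon}(S_0)} \bar{f}(\mathbf{x})$. The first step is to locate $\mathbf{x}^{*}$ inside $S_{k'}$ for every $k' \in (k, k+T_k]$. Since $\underline{S}_{k+T_k} = \underline{S}_k$, the contrapositive of \Cref{lem: S_expansion} gives $\bar{R}_{\bar{L}_\mathrm{t} + \epsilon}(S_0) \subseteq \underline{S}_k$. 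Using the monotonicity $\underline{S}_k \subseteq \underline{S}_{k'}$ from \Cref{lem: bound_S}(i) followed by the inclusion $\underline{S}_{k'} \subseteq S_{k'}$ from \Cref{lem: bound_S}(ii), we get $\mathbf{x}^{*} \in S_{k'}$.

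Next, I invoke \Cref{cor: uncertainty bound} to choose $k' \in (k, k+T_k]$ with $w_{k'}(\mathbf{x}, i) \leq \epsilon$ for every $\mathbf{x} \in G_{k'} \cup M_{k'}$ and every $i \in \mathcal{I}$. From here the argument splits into two cases depending on whether $\mathbf{x}^{*}$ belongs to the maximizer set $M_{k'}$. If $\mathbf{x}^{*} \notin M_{k'}$, then by the definition of $M_{k'}$ in \eqref{def: M_t} and by $\mathbf{x}^{*} \in S_{k'}$, we have
\[
u_{k'}(\mathbf{x}^{*}, 0) < \max_{\mathbf{x}' \in S_{k'}} l_{k'}(\mathbf{x}', 0) = l_{k'}(\hat{\mathbf{x}}_{k'}, 0),
\]
and \Cref{lem:confidence_interval_tighter} applied to both sides yields $\bar{f}(\mathbf{x}^{*}) \leq u_{k'}(\mathbf{x}^{*}, 0) < l_{k'}(\hat{\mathbf{x}}_{k'}, 0) \leq \bar{f}(\hat{\mathbf{x}}_{k'})$, which trivially implies the claim. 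If instead $\mathbf{x}^{*} \in M_{k'}$, then the uncertainty bound gives $u_{k'}(\mathbf{x}^{*}, 0) - l_{k'}(\mathbf{x}^{*}, 0) \leq \epsilon$, and chaining the confidence interval upper bound on $\bar{f}(\mathbf{x}^{*})$, the optimality of $\hat{\mathbf{x}}_{k'}$ inside $S_{k'}$ (noting $\mathbf{x}^{*} \in S_{k'}$), and the confidence interval lower bound on $\bar{f}(\hat{\mathbf{x}}_{k'})$ gives
\[
\bar{f}(\hat{\mathbf{x}}_{k'}) \geq l_{k'}(\hat{\mathbf{x}}_{k'}, 0) \geq l_{k'}(\mathbf{x}^{*}, 0) \geq u_{k'}(\mathbf{x}^{*}, 0) - \epsilon \geq \bar{f}(\mathbf{x}^{*}) - \epsilon.
\]

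The main subtlety, which is really the only non-routine step, is guaranteeing that the optimum $\mathbf{x}^{*}$ of the enlarged reachable set lies in the algorithm's current safe set $S_{k'}$; this is where the auxiliary set $\underline{S}_k$ and its robustification by $\bar{L}_\mathrm{t}$ earn their keep, because $S_k$ itself need not be monotone under the time-varying updates, whereas $\underline{S}_k$ is. Once this inclusion is in hand, the case split over $\mathbf{x}^{*} \in M_{k'}$ versus $\mathbf{x}^{*} \notin M_{k'}$ is the standard SafeOpt argument and all failure probabilities are absorbed into the single $1 - \delta$ event from \Cref{lem:confidence_interval_tighter} that underlies both \Cref{cor: uncertainty bound} and the set-inclusion chain.
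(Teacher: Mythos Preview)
Your proof is correct and follows essentially the same route as the paper: use the contrapositive of \Cref{lem: S_expansion} together with \Cref{lem: bound_S} to place $\bar{R}_{\bar{L}_\mathrm{t}+\epsilon}(S_0)$ inside $S_{k'}$, then combine \Cref{cor: uncertainty bound} with \Cref{lem:confidence_interval_tighter} and the definition of $\hat{\mathbf{x}}_{k'}$ to finish. The only cosmetic difference is that the paper works with the maximizer of $\bar{f}$ over $S_{k'}$ (which always lies in $M_{k'}$, so no case split is needed), whereas you work directly with the maximizer over $\bar{R}_{\bar{L}_\mathrm{t}+\epsilon}(S_0)$ and split on membership in $M_{k'}$; both arrive at the same chain of inequalities.
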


\begin{proof}
\text{    }\\

    Let $\mathbf{x}^*_{k^{\prime}}:=\argmax\limits_{\mathbf{x} \in S_{k^{\prime}}} \Bar{f}(\mathbf{x})$. Note that $\mathbf{x}^*_{k^{\prime}} \in M_{k^{\prime}}$, since
    \begin{align*}
        \begin{aligned}
            u_{k^{\prime}}(\mathbf{x}^*_{k^{\prime}},0) &\overset{(a)}{\geq}\Bar{f}(\mathbf{x}^*_{k^{\prime}})\\
            &\geq \Bar{f}(\hat{\mathbf{x}}_{k^{\prime}})\\
            &\overset{(b)}{\geq} l_{k^{\prime}}(\hat{\mathbf{x}}_{k^{\prime}},0)\\
            &\overset{(c)}{\geq} \max\limits_{\mathbf{x} \in S_{k^{\prime}}} l_{k^{\prime}}(\mathbf{x},0)
        \end{aligned}
    \end{align*}

    (a) and (b): \Cref{lem:confidence_interval_tighter},

    (c): Definition of $\hat{\mathbf{x}}_k$ \eqref{def: x_t_hat}.
    
    We will first show that $\exists k^{\prime} \in (k, k+T_k], s.t.\;\Bar{f}(\hat{\mathbf{x}}_{k^{\prime}}) \geq \Bar{f}(\mathbf{x}^*_{k^{\prime}}) - \epsilon$. Assume, to the contrary, that
    \[\forall k^{\prime} \in (k, k+T_k], \Bar{f}(\hat{\mathbf{x}}_{k^{\prime}}) < \Bar{f}(\mathbf{x}^*_{k^{\prime}}) - \epsilon\]
    Then, we have, $\exists k^{\prime} \in (k, k+T_k]$
    \begin{align*}
        \begin{aligned}
            &l_{t^{\prime}}(\mathbf{x}^*_{k^{\prime}},0) \\ \overset{(d)}{\leq}& l_{k^{\prime}}(\hat{\mathbf{x}}_{k^{\prime}},0)\\
            \overset{{(e)}}{\leq}& \Bar{f}(\hat{\mathbf{x}}_{k^{\prime}})\\
            <& \Bar{f}(\mathbf{x}^*_{k^{\prime}}) - \epsilon\\
            \overset{(f)}{\leq}&l_{k^{\prime}}(\mathbf{x}^*_{k^{\prime}},0),
        \end{aligned}
    \end{align*}
    which is a contradiction.

    (d): Definition of $\hat{\mathbf{x}}_k$ \eqref{def: x_t_hat},

    (e): \Cref{lem:confidence_interval_tighter},

    (f): \Cref{cor: uncertainty bound}, and $\mathbf{x}^*_{k^{\prime}} \in M_{k^{\prime}}$
    
Finally, $\bar{R}_{\bar{L}_\mathrm{t} + \epsilon}\left(S_0\right) \subseteq \underline{S}_{k^{\prime}} \subseteq S_{k^{\prime}}$, by \Cref{lem: S_expansion} and \Cref{lem: bound_S} (ii). Therefore, $\exists \, k^{\prime} \in (k, k+T_k]$ such that
    \begin{align*}
        \begin{aligned}
            \max _{\mathbf{x} \in \bar{R}_{\bar{L}_\mathrm{t} + \epsilon}\left(S_0\right)} \Bar{f}(\mathbf{x})-\epsilon & \leq \max _{\mathbf{x} \in S_{k^{\prime}}} \Bar{f}(\mathbf{x})-\epsilon\\
            &= \Bar{f}(\mathbf{x}^*_{k^{\prime}}) - \epsilon\\
            &\leq \Bar{f}(\hat{\mathbf{x}}_{k^{\prime}})
        \end{aligned}
    \end{align*}
\end{proof}

\subsection{Near-Optimality Proof}

\begin{proof}[Proof of \Cref{thm:optimality_tvsafeopt}]
\Cref{thm:optimality_tvsafeopt} is a direct consequence of \Cref{cor: uncertainty bound}, \Cref{lem: S_convergence}, and \Cref{lem: optimality}.
\end{proof}

\end{document}